\newcommand{\bl}[1]{\textcolor{blue}{#1}}
\definecolor{mypurple}{rgb}{.4,.0,.5}
\def\r{{\bf r}}
\def\y{{\bf y}}
\def\x{{\bf x}}
\def\x{{\mathbf x}}
\def\r{{\bf r}}
\def\x{{\bf x}}
\def\y{{\bf y}}
\def\z{{\bf z}}
\def\a{{\bf a}}
\def\d{{\bf d}}
\def\h{{\bf h}}
\def\cL{{\mathcal L}}
\def\cG{{\mathcal G}}
\def\be{\begin{equation}}
\def\ee{\end{equation}}
\def\ba{\left[\begin{array}}
\def\ea{\end{array}\right]}
\def\r{{\bf r}}
\def\x{{\bf x}}
\def\y{{\bf y}}
\def\z{{\bf z}}
\def\a{{\bf a}}
\def\d{{\bf d}}
\def\1{{\bf 1}}
\def\g{{\bf g}}
\def\0{{\bf 0}}
\def\erf{\mbox{erf}}
\def\erfc{\mbox{erfc}}
\def\mR{{\mathbb R}}
\def\mE{{\mathbb E}}
\def\mP{{\mathbb P}}
\def\lp{\left (}
\def\rp{\right )}
\def\r{{\bf r}}
\def\y{{\bf y}}
\def\x{{\bf x}}
\def\x{{\mathbf x}}
\def\r{{\bf r}}
\def\x{{\bf x}}
\def\y{{\bf y}}
\def\z{{\bf z}}
\def\a{{\bf a}}
\def\d{{\bf d}}
\def\h{{\bf h}}
\def\be{\begin{equation}}
\def\ee{\end{equation}}
\def\ba{\left[\begin{array}}
\def\ea{\end{array}\right]}
\def\r{{\bf r}}
\def\x{{\bf x}}
\def\y{{\bf y}}
\def\z{{\bf z}}
\def\a{{\bf a}}
\def\d{{\bf d}}
\def\R{{\bf R}}
\def\({\left (}
\def\){\right )}
\def\1{{\bf 1}}
\def\g{{\bf g}}
\def\0{{\bf 0}}
\definecolor{darkgreen}{rgb}{0, 0.4,0}
\definecolor{purplebrown}{rgb}{0.5,0.1,0.6}
\definecolor{ultclupcol}{rgb}{0.1,0.5,0.5}
\definecolor{mytrycolor}{rgb}{0.5,0.7,0.2}
\definecolor{ultclupcola}{rgb}{.5,0,.5}
\definecolor{shadebrown}{rgb}{0.1,0.1,0.9}
\definecolor{lightblue}{rgb}{0.2,0,1}
\newtcbox{\xmybox}{on line,
arc=7pt,
before upper={\rule[-3pt]{0pt}{10pt}},boxrule=0pt,
boxsep=0pt,left=6pt,right=6pt,top=0pt,bottom=0pt,enhanced, coltext=blue, colback=white!10!yellow}
\newtcbox{\xmyboxa}{on line,
arc=7pt,
before upper={\rule[-3pt]{0pt}{10pt}},boxrule=0pt,
boxsep=0pt,left=6pt,right=6pt,top=0pt,bottom=0pt,enhanced, colback=white!10!yellow}
\newtcbox{\xmyboxb}{on line,
arc=7pt,
before upper={\rule[-3pt]{0pt}{10pt}},boxrule=1pt,colframe=darkgreen!100!blue,
boxsep=0pt,left=6pt,right=6pt,top=0pt,bottom=0pt,enhanced, colback=white!10!yellow}
\newtcbox{\xmyboxc}{on line,
arc=7pt,
before upper={\rule[-3pt]{0pt}{10pt}},boxrule=.7pt,colframe=blue!100!blue,
boxsep=0pt,left=6pt,right=6pt,top=0pt,bottom=0pt,enhanced, coltext=blue, colback=white!10!yellow}
\newtcbox{\xmytboxa}{on line,
arc=7pt,
before upper={\rule[-3pt]{0pt}{10pt}},boxrule=.0pt,colframe=pink!50!yellow,
boxsep=0pt,left=6pt,right=6pt,top=0pt,bottom=0pt,enhanced, coltext=white, colback=blue!40!red}
\newtcbox{\xmytboxb}{on line,
arc=7pt,
before upper={\rule[-3pt]{0pt}{10pt}},boxrule=.0pt,colframe=pink!50!yellow,
boxsep=0pt,left=6pt,right=6pt,top=0pt,bottom=0pt,enhanced, coltext=white, colback=white!40!green}
\newcommand\subsubsubsection{\@startsection{paragraph}{4}{\z@}{-2.5ex\@plus -1ex \@minus -.25ex}{1.25ex \@plus .25ex}{\normalfont\normalsize\bfseries}}
\newcommand\subsubsubsubsection{\@startsection{subparagraph}{5}{\z@}{-2.5ex\@plus -1ex \@minus -.25ex}{1.25ex \@plus .25ex}{\normalfont\normalsize\bfseries}}
\newtheorem{theorem}{Theorem}
\newtheorem{lemma}{Lemma}
\begin{document}

\begin{singlespace}

\title{Optimal spectral initializers impact on phase retrieval phase transitions -- an RDT view %A tight variant of Gordon's escape through a mesh theorem
%\footnote{ This work was supported in
%part.}
}
\author{
\textsc{Mihailo Stojnic
\footnote{e-mail: {\tt flatoyer@gmail.com}} }}
\date{}
\maketitle

%%%%%%%%%%%%%%%%%%%%%%%%%%%%%%%%%%%%%%%%%%%%%%%%%%%%%%%%%%%%%%%%%%%%%%%%%%%%%%%%
%%%%%%%%%%%%%%%%%%%%%%%%%%%%%%%%%%%%%%%%%%%%%%%%%%%%%%%%%%%%%%%%%%%%%%%%%%%%%%%%
\centerline{{\bf Abstract}} \vspace*{0.1in}
%%%%%%%%%%%%%%%%%%%%%%%%%%%%%%%%%%%%%%%%%%%%%%%%%%%%%%%%%%%%%%%%%%%%%%%%%%%%%%%%
%%%%%%%%%%%%%%%%%%%%%%%%%%%%%%%%%%%%%%%%%%%%%%%%%%%%%%%%%%%%%%%%%%%%%%%%%%%%%%%%

We analyze the relation between spectral initializers and theoretical limits of \emph{descending} phase retrieval algorithms (dPR). In companion paper \cite{Stojnicphretreal24}, for any sample complexity ratio, $\alpha$, \emph{parametric manifold}, ${\mathcal {PM}}(\alpha)$, is recognized as a critically important structure that generically determines dPRs abilities to solve phase retrieval (PR). Moreover, overlap between the algorithmic solution and the true signal is positioned as a key ${\mathcal {PM}}$'s component. We here consider the so-called \emph{overlap optimal} spectral initializers (OptSpins) as dPR's starting points and  develop a generic \emph{Random duality theory} (RDT) based program to statistically characterize them. In particular, we determine the functional structure of OptSpins and evaluate the starting overlaps that they provide for the dPRs. Since ${\mathcal {PM}}$'s so-called \emph{flat regions} are highly susceptible to \emph{local jitteriness} and as such are key obstacles on dPR's path towards PR's global optimum, a precise characterization of the starting overlap allows to determine if such regions can be successfully circumvented. Through the presented theoretical analysis we observe two key points in that regard: \textbf{\emph{(i)}} dPR's  theoretical phase transition (critical $\alpha$ above which they solve PR) might be difficult to practically achieve as the ${\mathcal {PM}}$'s flat regions are large causing the associated OptSpins to fall exactly within them; and \textbf{\emph{(ii)}} Opting for so-called ``\emph{safer compression}'' and slightly increasing  $\alpha$ (by say $15\%$) shrinks flat regions and allows OptSpins to fall outside them and dPRs to ultimately solve PR. Numerical simulations are conducted as well and shown to be in an excellent agreement with theoretical predictions.

\vspace*{0.25in} \noindent {\bf Index Terms: Phase retrieval; Descending algorithms; Spectral initializers; Random duality theory}.

\end{singlespace}

%%%%%%%%%%%%%%%%%%%%%%%%%%%%%%%%%%%%%%%%%%%%%%%%%%%%%%%%%%%%%%%%%
%%%%%%%%%%%%%%%%%%%%%%%%%%%%%%%%%%%%%%%%%%%%%%%%%%%%%%%%%%%%%%%%%
%%%%%%%%%%%%%%%%%%%%%%%%%%%%%%%%%%%%%%%%%%%%%%%%%%%%%%%%%%%%%%%%%
%%%%%%%%%%%%%%%%%%%%%%%%%%%%%%%%%%%%%%%%%%%%%%%%%%%%%%%%%%%%%%%%%
%%%%%%%%%%%%%%%%%%%%%%%%%%%%%%%%%%%%%%%%%%%%%%%%%%%%%%%%%%%%%%%%%
%%%%%%%%%%%%%%%%%%%%%%%%%%%%%%%%%%%%%%%%%%%%%%%%%%%%%%%%%%%%%%%%%
\section{Introduction}
\label{sec:back}
%%%%%%%%%%%%%%%%%%%%%%%%%%%%%%%%%%%%%%%%%%%%%%%%%%%%%%%%%%%%%%%%%
%%%%%%%%%%%%%%%%%%%%%%%%%%%%%%%%%%%%%%%%%%%%%%%%%%%%%%%%%%%%%%%%%
%%%%%%%%%%%%%%%%%%%%%%%%%%%%%%%%%%%%%%%%%%%%%%%%%%%%%%%%%%%%%%%%%
%%%%%%%%%%%%%%%%%%%%%%%%%%%%%%%%%%%%%%%%%%%%%%%%%%%%%%%%%%%%%%%%%
%%%%%%%%%%%%%%%%%%%%%%%%%%%%%%%%%%%%%%%%%%%%%%%%%%%%%%%%%%%%%%%%%

Let $\bar{\x}\in\R^n$ be a unit norm data vector and let $\bar{\y}$ be the following collection of its \emph{phaseless} measurements
\begin{eqnarray}
   \bar{\y} &=& |A\bar{\x}|^2. \label{eq:inteq1}
\end{eqnarray}
Assuming that one has access to  $\bar{\y}$ and the so-called measurement matrix  $A\in\mR^{m\times n}$, recovering $\bar{\x}$ boils down to solving the following inverse feasibility problem
\begin{eqnarray}
\mbox{find} & & \x \nonumber \\
   \mbox{subject to} & & |A\x|^2=\bar{\y} (= |A\bar{\x}|^2). \label{eq:inteq2}
\end{eqnarray}
The system of quadratic equations in (\ref{eq:inteq2}) is the real variant of the famous \emph{phase retrieval} (PR) problem. PRs are the key mathematical concepts behind a plethora of signal and image processing data acquisitions and recovery protocols where accessing signal's phase is unavailable through conventional measuring techniques. History of these applications is rather reach and dates back to the first half of the last century and early days of x-ray crystallography \cite{Harrison93,Millane90,Millane06}. Following  x-ray crystallography expansion to noncrystalline materials, PR became a key mathematical component of coherent diffraction, ptychography, astronomical, optical, or microscopic imaging (see, e.g., \cite{Thibault08,Hurt89,KST1995,Miao1999ExtendingTM,ShechtmanECCMS15,MISE08,Bunk07,Walther01011963,Fienup87,Fienup78,Rod08,Dierolf10,BS79,Misell73}).  Further extensions including both imaging and non-imaging related ones followed as well. Examples include digital holography \cite{Duadi11,Gabor48,Gabor65}, blind deconvolution/demixing \cite{MWCC19,LLSW19,Jung17,ARJ13}, quantum physics \cite{Corb06,Hein13,HaahHJWY17}, and many others.

In this paper we focus on PR's mathematical aspects. Two groups of problems are of predominant interest on that front: \textbf{\emph{(i)}} The first group encompasses problems related to ``mathematical soundness'' of (\ref{eq:inteq2}) whereas \textbf{\emph{(ii)}} the second group puts the emphasis on actual algorithmic solving of (\ref{eq:inteq2}). Within the first group one typically looks at algebraic  characterizations of uniqueness/injectivity and stability  properties that stem from the PR's inherent phase ambiguity (see, e.g., \cite{Conca15,Balan06,Balan09,Bande14,Vinz15} and references therein). Roughly speaking, the most typical results state that (\ref{eq:inteq2}) is solvable in the so-called ``\emph{strong sense}''  provided that $\alpha\geq 2$ in the real and $\alpha\geq 4$ in the complex case \cite{Conca15,Balan06,Balan09,Bande14,Vinz15}. If one relaxes a bit the strong notion (which assumes solvability for \emph{any} $\bar{\x}$) and instead considers more practical ``\emph{weak sense}'' solvability (for say \emph{a} given $\bar{\x}$) then $\alpha\geq 1$ in the real and $\alpha\geq 2$ in the complex case suffices \cite{MaillardKLZ21}.

As this paper is closer to the second group, we take a more detailed look at typical solving strategies. Simply stated, solving (\ref{eq:inteq2}) assumes actual recovery of $\bar{\x}$. Since both $\bar{\x}$ and $-\bar{\x}$ are admissible solutions, (\ref{eq:inteq2}) can be solved only up to a \emph{global} phase (in the real case that means up to the sign of $\bar{\x}$). Consequently, throughout the paper we under solving (\ref{eq:inteq2}) always assume solving it up to a \emph{global} phase (also, we only  consider non-degenerate \emph{solvable} scenarios, where $A$ is such that besides $\pm \bar{\x}$  no other solution exists). Keeping all these technicalities in mind, one can then say that (\ref{eq:inteq2}) is conceptually readily solvable. For example, naively going over all $2^m$ different  sign options for the components of $\bar{\y}$ and solving the residual linear problems suffices. This is, however, computationally prohibitive when $n$ and $m$ are large which is the regime  of interest in a majority of practical applications. As it is also of interest in this paper, we mathematically formalize it as \emph{linear} (often referred to as proportional) high-dimensional regime characterized by the \emph{oversampling or sample complexity ratio}
\begin{eqnarray}
 \alpha \triangleq \lim_{n\rightarrow\infty} \frac{m}{n}, \label{eq:inteq3}
\end{eqnarray}
which remains constant as the size of data, $n$, and the sample complexity, $m$, grow. Within such a context the above mentioned naive exhaustive search is of \emph{exponential} complexity and practically unacceptable (particularly so if one has in mind that imaging PR applications often assume sizes of data vectors $n\sim 10000$).

Since it was clear from the early PR days that naive concepts can not satisfy practical demands, finding computationally efficient alternatives positioned itself as a key mathematical challenge. A systematic development of a vast algorithmic theory naturally ensued resulting in a steady progress through the better portion of the second half of the last century \cite{Gerch72,Gabor65,Misell73,Fienup82,Fienup78,Fienup87}. Perhaps somewhat unexpectedly, research particularly intensified over the lat two decades. Several extraordinary algorithmic and theoretical breakthroughs  played a key role in such a rapid PR revival, see, e.g., \cite{CandesSV13,CandesESV13,CandesLS15}. One of these breakthroughs uncovered that simple gradient descent based algorithmic techniques  (conveniently called Wirtinger flow in \cite{CandesLS15})  surprisingly perform very well despite a lack of (presumably necessary) underlying convexity. In parallel, \cite{CandesLS15} also recognized the role played by the so-called initializers -- the starting points of the gradient algorithms. Companion paper \cite{Stojnicphretreal24} studied generic \emph{descending} phase retrieval algorithms (dPR) (which include as special cases any forms of gradient descent or Wirtinger flow) and provided a \emph{precise} phase transition type of theoretical analysis of their performance. Here we complement such an analysis with the corresponding one related to the initializers and connect the two ultimately completing the overall  mosaic regarding theoretical justification of a rather remarkable practical success of these classical optimization techniques. Before proceeding with the presentation of our results we briefly survey related technical prior work.

%%%%%%%%%%%%%%%%%%%%%%%%%%%%%%%%%%%%%%%%%%%%%%%%%%%%%%%%%%%%%%%%%%%%%%%%%%%%%%%%%%%%%%%%%%%%%%%%%%%%%%%%%%%%%
%%%%%%%%%%%%%%%%%%%%%%%%%%%%%%%%%%%%%%%%%%%%%%%%%%%%%%%%%%%%%%%%%%%%%%%%%%%%%%%%%%%%%%%%%%%%%%%%%%%%%%%%%%%%%
\subsection{Relevant prior work and key analytical features}
\label{sec:relwork}
%%%%%%%%%%%%%%%%%%%%%%%%%%%%%%%%%%%%%%%%%%%%%%%%%%%%%%%%%%%%%%%%%%%%%%%%%%%%%%%%%%%%%%%%%%%%%%%%%%%%%%%%%%%%%
%%%%%%%%%%%%%%%%%%%%%%%%%%%%%%%%%%%%%%%%%%%%%%%%%%%%%%%%%%%%%%%%%%%%%%%%%%%%%%%%%%%%%%%%%%%%%%%%%%%%%%%%%%%%%

%%%%%%%%%%%%%%%%%%%%%%%%%%%%%%%%%%%%%%%%%%%%%%%%%%%%%%%%%%%%%%%%%%%%%%%%%%%%%%%%%%%%%%%%%%%%%%%%%%%%%%%%%%%%%
%%%%%%%%%%%%%%%%%%%%%%%%%%%%%%%%%%%%%%%%%%%%%%%%%%%%%%%%%%%%%%%%%%%%%%%%%%%%%%%%%%%%%%%%%%%%%%%%%%%%%%%%%%%%%
\subsubsection{Algorithmic methods}
\label{sec:relworkalg}
%%%%%%%%%%%%%%%%%%%%%%%%%%%%%%%%%%%%%%%%%%%%%%%%%%%%%%%%%%%%%%%%%%%%%%%%%%%%%%%%%%%%%%%%%%%%%%%%%%%%%%%%%%%%%
%%%%%%%%%%%%%%%%%%%%%%%%%%%%%%%%%%%%%%%%%%%%%%%%%%%%%%%%%%%%%%%%%%%%%%%%%%%%%%%%%%%%%%%%%%%%%%%%%%%%%%%%%%%%%

Many known algorithmic techniques have been adapted so that they can be utilized within the PR context and many new ones catering to specific PR's particularities have been developed as well. Quite a few also come with strong associated theoretical guarantees thereby supplying an additional justification for their use. We below revisit two separate groups of these techniques, convex and non-convex (in addition to convexity, various other groupings are possible as well; for example, they can be based on computational complexity, speed of convergence, easiness of practical implementation and so on).

\noindent $\star$ \underline{\emph{Convex methods}:}   PR in (\ref{eq:inteq2}) is a non-convex feasibility problem. Given that the constraint set can easily be recast as $\bar{\y}=\mbox{diag}(A^TXA), X=X^T\geq 0, \mbox{rank}(X)=1$, it is not that difficult to see that the non-convexity stems from the rank-1 matrix  constraint. Following standard semi-definite programming (SDP) methodology,  \cite{CandesSV13,CandesESV13} proposed the so-called Phaselift ``dropping rank-1'' relaxation  (for a closely connected Phasecut relaxation --  a direct maxcut PR analogue -- see \cite{WaldspurgerdM15}, and for related SDP considerations see also \cite{Ohlsen12}). \cite{CandesSV13,CandesESV13} went further and showed that Phaselift  both exactly and stably solves PR provided that $m=O(n\log(n))$ (a further Phaselift robustness discussion can be found in \cite{PHand17}).  In \cite{CandesL14}, the sample complexity estimate was lowered to $m=O(n)$, which is the optimal order. Analogous results when Fourier (instead of Gaussian) measurements are used are presented  in \cite{GFK17,CandesLS15b}. While the SDP  relaxations are typically among the tightest convex ones, they require solving the residual SDP which usually poses a serious numerical challenge in large dimensional contexts. \cite{GoldsteinS18,BahmaniR16} introduced a numerically more acceptable alternative, PhaseMax, and showed that it solves PR provided that the initializer is sufficiently close to $\bar{\x}$  (see also \cite{HandV16,ChenCandes17} for initializers that bring the sample complexity to $m=O(n)$ and \cite{SalehiAH18} for a precise PhaseMax analysis). A multi-iterative PhaseMax alternative, PhaseLamp, was considered in \cite{DhifallahTL17} and shown to have better theoretical and algorithmic properties than PhaseMax. Additional structuring of $\bar{\x}$ (sparsity, positivity, low-rankness and so on) can easily be incorporated in convex methods by following the practice typically adopted in standard compressed sensing (see, e.g. \cite{Ohlsen12,LiVor13,KeungRTi17}). For example, \cite{LiVor13} shows that $m=O(k^2\log(n))$ measurements suffice for successful PR recovery of a $k$-sparse $\bar{\x}$ which is even for $m\ll n$ (a nonlinear regime not of our prevalent interest) surprisingly weaker than the corresponding  compressed sensing result $m=O(k\log(n))$. Many upgrades of regular convex techniques are possible as well when $\bar{\x}$ is additionally structured (see, e.g.,  randomized Kaczmarz adaptations in \cite{TV19,KWei15} and  two-stage approaches in \cite{JaganathanOH17,IwenVW17}).

\noindent $\star$ \underline{\emph{Non-convex methods}:} While the modern optimization theory practice commonly suggests first trying convex and then non-convex methods, within the PR context, things were reversed, i.e., the utilization  of the non-convex  methods significantly predated the convex ones. The early PR algorithms typically related to the so-called alternating minimization \cite{Gerch72,Fienup82,Fienup87}. As stated above, a big breakthrough arrived when \cite{CandesLS15} uncovered that the simple gradient (Wirtinger flow) often significantly outperforms all known techniques (for a closely related variant called amplitude flow, see, e.g., \cite{WangGE18} and  for further modifications, e.g.,  \cite{CLM16}). Moreover, not only was the (non-convex) gradient empirically outperforming the (convex) SDPs, but the accompanying theoretical analysis of  \cite{CandesLS15} revealed that it is doing so while allowing for $m=O(n\log(n))$ sample complexity (sparse adaptations of Wirtinger and
amplitude flow \cite{YuanWW19,WangGE18}, follow the suit of the convex methods and allow for $m=O(k^2\log(n))$; on the other hand, \cite{Soltanolkotabi19} in a way breaks the unpleasant $k^2$ barrier and shows that a local Wirtinger flow convergence can be achieved with $m=O(k\log(n))$). Prior to the appearance of \cite{Stojnicphretreal24}, this was viewed as fairly surprising given that the preliminary considerations of \cite{SunQW18,HandLV18}  indicated PR objective's very strong deviations from convexity. Another non-convex
classical method, Alternating direction method of multipliers (ADMM) (practically Douglas-Rachford in the PR context), was considered in \cite{FannjiangZ20} and shown to perform very well (for further theoretical characterizations of alternating minimization projections, see also \cite{Netrapalli0S15,Waldspurger18,Yanget94,March07,March14}).

Drawing parallel with the promise generative models and deep learning have shown in compressed sensing
\cite{LeiJDD19,JordanD20,BoraJPD17,DaskalakisRZ20a}, \cite{HandLV18} introduced a Deep (sparse) phase retrieval (DPR) concept. Empirical results on par with those from \cite{BoraJPD17}  were obtained, indicating deep nets' strong potential to substantially improve on non-machine learning based techniques. \cite{HandLV18} provided additional theoretical support as well and proved that $d$-layer nets (with a constant expansion) can achieve such a performance with $m=O(kd\log(n))$. For a constant $d$, this allows breaking the above $k^2\log(n)$ barrier and achieving the $k\log(n)$ sample complexity, precisely as predicated through  the analogy with standard compressed sensing. On the other hand, benefits of deep learning (DL) come hand in hand with some limitations as well. Nonzero errors are hard or impossible to achieve which basically renders DL as noncompetitive with exact methods  in the so-called realizable or noiseless scenarios. Also, allowed problem dimensions are often restrictive and frequent retraining might be time-consuming. Despite a strong progress recently made in \cite{MBBDN23,Stojnicinjdeeprelu24,Stojnicinjrelu24}, precise theoretical characterizations are significantly harder to obtain and PR utilization of deep nets is effectively still a bit distant from expected generic superiority. A generically more competitive class of methods includes heuristic adaptations of famous approximate message passing (AMP) algorithm (a large scale compressed sensing alternative introduced in  \cite{DonMalMon09}). While such methods and the accompanying theoretical analyses often heavily depend on model assumptions, an excellent empirical performance within phase retrieval context is shown in \cite{SchniterR15}.

%%%%%%%%%%%%%%%%%%%%%%%%%%%%%%%%%%%%%%%%%%%%%%%%%%%%%%%%%%%%%%%%%%%%%%%%%%%%%%%%%%%%%%%%%%%%%%%%%%%%%%%%%%%%%
%%%%%%%%%%%%%%%%%%%%%%%%%%%%%%%%%%%%%%%%%%%%%%%%%%%%%%%%%%%%%%%%%%%%%%%%%%%%%%%%%%%%%%%%%%%%%%%%%%%%%%%%%%%%%
\subsubsection{Initializers}
\label{sec:relworkinit}
%%%%%%%%%%%%%%%%%%%%%%%%%%%%%%%%%%%%%%%%%%%%%%%%%%%%%%%%%%%%%%%%%%%%%%%%%%%%%%%%%%%%%%%%%%%%%%%%%%%%%%%%%%%%%
%%%%%%%%%%%%%%%%%%%%%%%%%%%%%%%%%%%%%%%%%%%%%%%%%%%%%%%%%%%%%%%%%%%%%%%%%%%%%%%%%%%%%%%%%%%%%%%%%%%%%%%%%%%%%

\noindent $\star$ \underline{\emph{Role of initializers}:} Both convex and non-convex groups of algorithms require a starting point -- \emph{initializer}. As initializers do not affect the accuracy of convex methods their  role within this group is less pronounced and typically related to the speed of convergence. Moreover, since convex methods are generally viewed as implementable in polynomial time the initializers' impact on computational complexity is also rather limited. On the other hand, when it comes to the non-convex alternatives, things are way different. The initializers can critically impact both their ability to reach the global optima and their computational efficiency. As stated earlier, figuring out the intricacies of the initializers' roles within non-convex methods in a precise manner is the topic of this paper. There have been some excellent related prior results in these directions as well and we briefly look at them below.

\noindent $\star$ \underline{\emph{Prior initializers results}:} Within phase retrieval context, spectral initializers appeared as an integral component alternating minimization approach of \cite{Netrapalli0S15} (see also \cite{Netrapalli0S13} for related and \cite{Li92} for earlier similar considerations). Within the context of gradient methods, \cite{CandesLS15} proposed simple diagonal spectral initializers to start Wirtinger flow.  \cite{CandesLS15,Netrapalli0S15,Netrapalli0S13} also showed that not only can the spectral initializers serve as good starters for non-convex methods but they can themselves solve PR provided that $m$ is large enough.  Using a truncated variant of diagonal initializers, \cite{ChenCandes17} showed that this can be achieved even in a linear regime, i.e., for sample complexity $m=O(n)$. However, as the associated proportionality constants are rather large in practice spectral methods are commonly perceived as good initializers rather than direct PR solvers. To further substantiate such a common intuition,  precise performance characterizations of spectral initializers naturally appeared as necessary. \cite{LuL17} built on the above initial success of spectral methods  and considered a more general nonnegative diagonal spectral variant (see also \cite{LuLi20} for further extensions) and obtained a precise ``overlap vs oversampling ratio'' characterization. It further uncovered that the oversampling ratio undergoes a sharp transition between phases where the overlap  is zero or nonzero. This immediately had direct consequences on non-convex PR algorithms as their prowess is strongly predicated on the use of initializers with nonzero overlap. Results of \cite{LuL17} were then extended  to incorporate negative diagonals (and complex domain) in \cite{MondelliM19}. \cite{MondelliM19} also introduced the concept of ``weak threshold'' as critical oversampling ratio below which no diagonal spectral initializer can achieve nonzero overlap. Moreover, \cite{MondelliM19} precisely determined the value of such a threshold. In \cite{LuoAL19} the optimality of spectral  preprocessing analyzed in \cite{MondelliM19} was proven for any sample complexity (i.e., not only for the critical one corresponding to the weak threshold). Complementing results of  \cite{LuL17,MondelliM19} that relied on Gaussian measurements, \cite{MaDXMW21} considered practically more attractive orthogonal ones and utilized an Expectation Propagation paradigm to precisely characterize associated ``overlap vs oversampling ratio'' (even more general measurement models were considered via statistical physics tools in \cite{AubinLBKZ20,MaillardKLZ21}). A mathematically fully rigorous confirmation of the results from  \cite{MaDXMW21} was presented in \cite{DudejaB0M20}.

%%%%%%%%%%%%%%%%%%%%%%%%%%%%%%%%%%%%%%%%%%%%%%%%%%%%%%%%%%%%%%%%%%%%%%%%%%%%%%%%%%%%%%%%%%%%%%%%%%%%%%%%%%%%%
%%%%%%%%%%%%%%%%%%%%%%%%%%%%%%%%%%%%%%%%%%%%%%%%%%%%%%%%%%%%%%%%%%%%%%%%%%%%%%%%%%%%%%%%%%%%%%%%%%%%%%%%%%%%%
\subsubsection{Types of performance analysis -- \emph{``Qualitative'' vs ``quantitative''}}
\label{sec:relworktypesanal}
%%%%%%%%%%%%%%%%%%%%%%%%%%%%%%%%%%%%%%%%%%%%%%%%%%%%%%%%%%%%%%%%%%%%%%%%%%%%%%%%%%%%%%%%%%%%%%%%%%%%%%%%%%%%%
%%%%%%%%%%%%%%%%%%%%%%%%%%%%%%%%%%%%%%%%%%%%%%%%%%%%%%%%%%%%%%%%%%%%%%%%%%%%%%%%%%%%%%%%%%%%%%%%%%%%%%%%%%%%%

\noindent $\star$ \underline{\emph{Relevance of quantitative analysis}:} Keeping all parameters equal, the most desirable feature of any PR algorithm is the lowest allowed sample complexity.  Among the convex methods, the SDP relaxations (Phaselift and its derivatives) have the lowest sample complexity whereas their linear alternatives (PhaseMax and its derivatives) are computationally the most efficient. A natural question is if the algorithms that achieve both low sample complexity and high computational efficiency can be designed. Empirical results suggest that some of the non-convex methods discussed above (including gradient or Wirtinger flow) to a large degree do achieve that. Theoretically analyzing them while relying on the usual ``\emph{qualitative}'' types of analysis is unlikely to provide convincing arguments that would allow to capture needed comparative subtleties.  In order to bridge the gap, more precise \emph{quantitative} approaches are required. For example, when one practically runs spectral methods in real case they require $\alpha\sim 10$ whereas the corresponding running of gradient of Wirtinger flow typically requires $\alpha\sim 2.1$. Clearly the latter is way more preferable in practice. As qualitative theoretical analyses only show the linearity of $m$ for both methods, they do not suffice to convincingly distinguish which linearity is practically better and by how much.

\noindent $\star$ \underline{\emph{``Quantitative vs qualitative''}:} The above is just a tiny example. Way more general principle holds. Existing theoretical analyses associated with the \emph{majority} of the non-convex methods discussed above are actually of the qualitative type. As such they typically provide correct orders of magnitudes of the analyzed quantities which is usually sufficient to justify algorithms' behavior and conduct some basic/rough comparisons with competitive alternatives. However, when more accurate comparisons between similar alternatives are needed the level of precision of the qualitative analyses does not suffice which renders obtaining precise, \emph{quantitative}, characterizations as critically important. However, these are technically a lot harder and the known results are consequently much scarcer. For example, a replica analysis of the vector AMP (VAMP) algorithms \cite{SchniterRF16,RanganSF17} is presented in \cite{TakahashiK22}. Bayesian inference context similar to the one from  \cite{TakahashiK22}  is considered in \cite{MaillardLKZ20} and a large set of replica predictions confirmed (a real version of the complex counterpart from \cite{MaillardLKZ20} can be found in \cite{BarbierKMMZ18}; another excellent example of  Bayesian context utilization can be found in  \cite{StrSag25}, where a two-stage phase-selection  procedure is proposed and analyzed via replica methods). As Bayesian context in general heavily relies on assuming statistical identicalness of $\bar{\x}$'s components and a perfect prior knowledge of statistics of both $\bar{\x}$ and the so-called channel (or posterior), it is not easily amenable for a fair comparison with the methods discussed above that do not require such assumptions. On a different token though, it does provide precise, phase transition type of results that are far away from being reachable by any of the simpler qualitative methods discussed above. One should also keep in mind that the qualitative methods are by no means easy to come up with. They are often hard on their own and the quantitative ones are just much harder.

%%%%%%%%%%%%%%%%%%%%%%%%%%%%%%%%%%%%%%%%%%%%%%%%%%%%%%%%%%%%%%%%%%%%%%%%%%%%%%%%%%%%%%%%%%%%%%%%%%%%%%%%%%%%%
%%%%%%%%%%%%%%%%%%%%%%%%%%%%%%%%%%%%%%%%%%%%%%%%%%%%%%%%%%%%%%%%%%%%%%%%%%%%%%%%%%%%%%%%%%%%%%%%%%%%%%%%%%%%%
\subsection{Our contributions}
\label{sec:contrib}
%%%%%%%%%%%%%%%%%%%%%%%%%%%%%%%%%%%%%%%%%%%%%%%%%%%%%%%%%%%%%%%%%%%%%%%%%%%%%%%%%%%%%%%%%%%%%%%%%%%%%%%%%%%%%
%%%%%%%%%%%%%%%%%%%%%%%%%%%%%%%%%%%%%%%%%%%%%%%%%%%%%%%%%%%%%%%%%%%%%%%%%%%%%%%%%%%%%%%%%%%%%%%%%%%%%%%%%%%%%

Our focus is on \emph{precise} (quantitative) theoretical performance characterizations of non-convex phase retrieval optimization algorithms. In companion paper \cite{Stojnicphretreal24}, the so-called \emph{descending} phase retrieval algorithms (dPR) are considered. Here we focus on the relation between the overlap optimal spectral initializers (OptSpins) and dPRs. We recall that we consider mathematically most challenging \emph{linear} high-dimensional regime  with sample complexity ratio $\alpha=\lim_{n\rightarrow \infty} \frac{m}{n}$ remaining constant as the length of the unknown vector, $n$, and the number of phaseless measurements, $m$, grow. Similarly to  \cite{Stojnicphretreal24}, for the easiness of exposition we assume standard normal measurements and consider the real valued scenario. Since additional technical adjustments are needed in the complex case, to avoid sidetracking main discussions, we defer presenting them to a separate companion paper (however, we do emphasize that everything presented here remains conceptually unaltered in the complex case).

\begin{itemize}
\item As \cite{Stojnicphretreal24} showed, the shape of the phase retrieval associated  \emph{parametric manifold}, ${\mathcal {PM}}(\alpha)$ critically impacts dPRs abilities to solve phase retrieval. The overlap between the phase retrieval optimizing variable, $\x$, and its true (desired) solution, $\bar{\x}$, is positioned as a key ${\mathcal {PM}}$'s component. Intuitively, larger the overlap of the dPR's starting point more favorable its position within the ${\mathcal {PM}}$ and more likely the ultimate dPR's success. Following such an intuition, we distinguish between \emph{general} and \emph{overlap optimal} spectral initializers (OptSpins) (see Section \ref{sec:specinit}) and observe that precisely characterizing the performance of the latter directly impacts practical and theoretical dPR's phase transitions (see Section \ref{sec:algimp}).

\item We formulate a convenient \emph{fundamental  spectral initializers optimization} (f-spin) and recognize that studying behavior of its objective is of key importance for understanding the performance of OptSpins (see part 1) in Section \ref{sec:ubrdt}).

  \item To study f-spin we develop a \emph{Random duality theory}  (RDT) based generic statistical  program  (see parts 2) and 3) in Section \ref{sec:ubrdt}).

  \item  The developed RDT program allows to \emph{precisely} determine the starting dPR overlap values  if OptSpins are used as a preprocessing step before running any dPR (see Figure \ref{fig:fig1} in Section \ref{sec:ubrdt}). The overlap values precisely match the corresponding ones obtained through random matrix and free probability spectral theory in  \cite{MondelliM19,LuoAL19}. However, the RDT methods that we present are completely different, simpler, and  more general (for example, they easily allow for further extensions and incorporation of additionally structured signals including sparse, block-sparse, positive, binary, box-constrained, partially observed, and many others).

\item Once starting dPR overlap values are determined one can superimpose them on ${\mathcal {PM}}(\alpha)$ and check what portion/region of the manifold they fall in. Depending on $\alpha$,  ${\mathcal {PM}}(\alpha)$ can exhibit large so-called \emph{flat regions} which are easily susceptible to local jitteriness finite dimensional effects and as such might pose an unsurpassable obstacle for dPRs. It is therefore recognized as highly desirable that starting dPR points fall outside such regions (see discussion after Theorem \ref{thm:thm1a} in Section \ref{sec:algimp}).

\item We demonstrate these concepts by first choosing two different sample complexity ratios $\alpha=1.4$ and $\alpha=1.6$, then determining the corresponding dPR starting overlap values from Figure \ref{fig:fig1}, and ultimately superimposing them on critical ${\mathcal {PM}}(\alpha)$'s regions in Figures \ref{fig:fig2} and \ref{fig:fig3}. The first $\alpha$ value is chosen as the dPR theoretical phase transition obtained in  \cite{Stojnicphretreal24} (this is the limiting $\alpha$ for which the lifted RDT ensures single funneling point structure of ${\mathcal {PM}}$ which in return directly corresponds to the dPR's ability to solve phase retrieval). On the other hand, the second $\alpha$ value is chosen as an example of the so-called ``\emph{safer compression}'' which allows for a bit of a buffer zone and supposedly a bit more comfortable (or so to say, away from the edge) running of dPR.

 \item   We observe two key points: \textbf{\emph{(i)}} dPR's global convergence theoretical phase transitions (predicated by the lifted RDT to be at $\alpha\approx 1.4$) might be practically difficult to achieve as ${\mathcal {PM}}$'s flat regions occupy large portions of the allowed $[0,1]$ overlap interval and the dPR's starting overlap values (provided by the associated OptSpins) fall exactly within such regions (i.e., for $\alpha\approx 1.4$, the OptSpins are unlikely to be able to allow dPR to circumvent ${\mathcal {PM}}$'s flat regions); and \textbf{\emph{(ii)}} if instead one opts for the ``safer compression'' and slightly increases $\alpha$ (by say $15\%$), the OptSpins fall outside flat regions and the dPRs are likely to solve phase retrieval.

 \item Numerical experiments are conducted as well and obtained results are shown to be in an excellent agreement with the theoretical predictions. In particular, we implement a hybrid combination of a log barrier and a plain gradient descent and observe that, even in small dimensional scenarios with  $n=300$ (which are highly likely to be strongly susceptible to jitteriness effects), the simulated phase transitions are fairly close to the above mentioned safer compression adjusted theoretical predictions (see Figure \ref{fig:fig7}).

 \item Following the trend of  \cite{Stojnicphretreal24}, we found utilization of the so-called \emph{squared} magnitudes objectives in practical algorithmic implementations as particularly convenient. On the other hand, the theoretical considerations (presented in Theorems \ref{thm:thm1a} and \ref{thm:thm2a} and visualized in Figures \ref{fig:fig2} and \ref{fig:fig3}),  were easier to conduct with the \emph{non-squared} magnitudes objectives.  Despite substantial numerical difficulties, we complement the theoretical non-squared results with the corresponding squared ones in Section \ref{sec:sqadj}. While the numerical precision of the residual evaluations is  naturally not as high, all main theoretical conclusions from the non-squared  translate to the corresponding squared magnitudes scenario (see Figures \ref{fig:fig2} and \ref{fig:fig4} and observe that the superimposed spectral initializers in both figures fall into the ``undesired'' flat regions).

\item We also note that the OptSpins can be utilized to lower the theoretical dPR phase transitions as well. A state of the art conceptual discussion in this direction is provided in Section \ref{sec:thimpact}.

\end{itemize}

%%%%%%%%%%%%%%%%%%%%%%%%%%%%%%%%%%%%%%%%%%%%%%%%%%%%%%%%%%%%%%%%%%%%%%%%%%%%%%%%%%%%%%%%%%%%%%%%%%%%%%%%%%%%%%%%%%%%%%%
%%%%%%%%%%%%%%%%%%%%%%%%%%%%%%%%%%%%%%%%%%%%%%%%%%%%%%%%%%%%%%%%%%%%%%%%%%%%%%%%%%%%%%%%%%%%%%%%%%%%%%%%%%%%%%%%%%%%%%%
\section{Phase retrieval -- basics}
 \label{sec:2lay}
%%%%%%%%%%%%%%%%%%%%%%%%%%%%%%%%%%%%%%%%%%%%%%%%%%%%%%%%%%%%%%%%%%%%%%%%%%%%%%%%%%%%%%%%%%%%%%%%%%%%%%%%%%%%%%%%%%%%%%%
%%%%%%%%%%%%%%%%%%%%%%%%%%%%%%%%%%%%%%%%%%%%%%%%%%%%%%%%%%%%%%%%%%%%%%%%%%%%%%%%%%%%%%%%%%%%%%%%%%%%%%%%%%%%%%%%%%%%%%%

As mentioned earlier, recovering unknown signal within phase retrieval context algorithmically boils down to finding unique (modulo global phase) solution of
 \begin{eqnarray}
 {\mathcal R}(A): \qquad \qquad    \min_{\x,\z} & & \||A\bar{\x}|-|\z| \|_2^2\nonumber \\
  \mbox{subject to} & &  A\x=\z. \label{eq:ex1a4}
\end{eqnarray}
Following into the footsteps of \cite{Stojnicinjrelu24,StojnicGardGen13,StojnicICASSP10var,StojnicCSetam09} and keeping in mind that measurements are random, the above \emph{random optimization problem} (rop) is in \cite{Stojnicphretreal24} conveniently associated with
 \begin{eqnarray}
\hspace{-.4in}\bl{\textbf{\emph{Fundamental phase-retrieval optimization (f-pro):}}} \qquad\qquad  \xi(c,x) \triangleq \min_{\x,\z} & & \||A\bar{\x}|-|\z| \|_2^2\nonumber \\
  \mbox{subject to} & &  A\x=\z \nonumber \\
  & & \x^T\bar{\x}=x \nonumber \\
  & & \|\x\|_2^2=c. \label{eq:ex1a4a0}
\end{eqnarray}
As discussed in above (and in detail in \cite{Stojnicphretreal24}), the formulations in  (\ref{eq:ex1a4}) and (\ref{eq:ex1a4a0}) use the so-called \emph{non-squared} magnitudes and correspond to what is in literature typically called \emph{amplitude} measurements. A slightly more prevalent formulation would instead make use of the \emph{squared} magnitudes and would consequently correspond to the so-called \emph{intensity} measurements. In a further alignment with the above discussion, \cite{Stojnicphretreal24} also showed that the non-squared formulations (precisely those stated in (\ref{eq:ex1a4}) and (\ref{eq:ex1a4a0}))  are more convenient from the analytical point of view whereas the squared ones are easier for practical implementations. Since \cite{Stojnicphretreal24} also showed that not much of a conceptual difference between the two appears to exist, the advantage here is given to analytically more elegant and numerically less cumbersome non-squared ones (for a brief discussion about the squared ones see Section \ref{sec:sqadj}).

Looking at (\ref{eq:ex1a4a0}), it is not that difficult to guess that two parameters attributed to the optimizing variable (its squared norm, $c$, and its overlap with the true solution, $x$) play a key role in characterizing the accuracy of any associated optimization algorithm. In fact, \cite{Stojnicphretreal24} provided a precise analysis of any \emph{descending} algorithm and identified structure (shape) of a parametric manifold of pairs $(c,x)$ as of critical importance for ensuring that such algorithms ultimately succeed in solving phase retrieval. In addition to such a theoretical reassurance, a long standing common belief regarding practical (algorithmic) aspects of  (\ref{eq:ex1a4}) is the following: vectors that have larger overlap with the true $\bar{\x}$ (i.e., vectors that result in larger $x$ in (\ref{eq:ex1a4a0})) are likely to serve as better algorithm's starting points (\emph{initializers}). Practical implementations (including the most common ones based on convex matrix rank relaxations \cite{CandesL14,CandesSV13}, Wirtinger flow \cite{CandesLS15}, or approximate message passing (AMP) \cite{SchniterR15,MaillardLKZ20,MaillardKLZ21,AubinLBKZ20}) seem to support such an intuition as well. On the other hand, theoretical analyses that would precisely confirm this are much harder to obtain. A strong progress in that direction has been made in companion paper \cite{Stojnicphretreal24} where the overlap is directly connected to the phase transitional (PT) behavior of \emph{all descending} phase retrieval algorithms (dPRs). Here, we further the discussion initialized in \cite{Stojnicphretreal24} and consider precise analytical characterizations of spectral methods optimal overlaps and the impact they have on phase retrieval PTs. While overlaps themselves can be characterized via spectral random matrix theory methods (see, \cite{LuL17,LuoAL19,LuLi20,MondelliM19,MaDXMW21,DudejaB0M20}), we below present a different, simpler, and more general method that relies on \emph{Random duality theory} (RDT). This is also in line with the presentation of \cite{Stojnicphretreal24} where the RDT was heavily utilized for the analyses of dPR's ${\mathcal {PM}}$s.

%%%%%%%%%%%%%%%%%%%%%%%%%%%%%%%%%%%%%%%%%%%%%%%%%%%%%%%%%%%%%%%%%%%%%%%%%%%%%%%%%%%%%%%%%%%%%%%%
%%%%%%%%%%%%%%%%%%%%%%%%%%%%%%%%%%%%%%%%%%%%%%%%%%%%%%%%%%%%%%%%%%%%%%%%%%%%%%%%%%%%%%%%%%%%%%%%
%%%%%%%%%%%%%%%%%%%%%%%%%%%%%%%%%%%%%%%%%%%%%%%%%%%%%%%%%%%%%%%%%%%%%%%%%%%%%%%%%%%%%%%%%%%%%%%%
\section{Spectral initializers}
\label{sec:specinit}
%%%%%%%%%%%%%%%%%%%%%%%%%%%%%%%%%%%%%%%%%%%%%%%%%%%%%%%%%%%%%%%%%%%%%%%%%%%%%%%%%%%%%%%%%%%%%%%%
%%%%%%%%%%%%%%%%%%%%%%%%%%%%%%%%%%%%%%%%%%%%%%%%%%%%%%%%%%%%%%%%%%%%%%%%%%%%%%%%%%%%%%%%%%%%%%%%
%%%%%%%%%%%%%%%%%%%%%%%%%%%%%%%%%%%%%%%%%%%%%%%%%%%%%%%%%%%%%%%%%%%%%%%%%%%%%%%%%%%%%%%%%%%%%%%%

There are many ways how one can obtain a (non-exact) estimate of $\bar{\x}$. For example, to run Wirtinger flow, \cite{CandesLS15} suggested utilizing, as a preprocessing step, the so-called spectral estimators of the following form
 \begin{equation}
\bl{\mbox{\emph{\textbf{General sepctral initializer:}}}} \qquad  \x^{(spec)}  \triangleq  \mbox{max eigenvector} \lp A^T \mbox{diag} \lp  {\mathcal T}(\bar{\y})  \rp   A\rp,
\label{eq:speceq1}
\end{equation}
where ${\mathcal T} (\bar{\y})$ is a function of phaseless measurements $\bar{\y}$. Moreover, for the concreteness, \cite{CandesLS15} suggested the following
${\mathcal T} (\bar{\y})$ choice
 \begin{equation}
{\mathcal T} (\bar{\y}) =\bar{\y} (=|A\x|^2).
\label{eq:speceq2}
\end{equation}
Numerical implementations conducted in \cite{CandesLS15} confirmed that the above choice works rather well. Moreover, \cite{CandesLS15} also recognized that the following quantity (the above mentioned so-called overlap between $\x^{(spec)}$ and the true signal, $\bar{\x}$) might critically impact the ability of Wirtinger flow to actually solve the phase retrieval
 \begin{equation}
\hspace{-1.2in}\bl{\mbox{\emph{\textbf{Spectral initializer overlap:}}}} \qquad x^{(spec)} \triangleq  \frac{\lp \x^{(spec)} \rp^T \bar{\x}  }  {\|\x^{(spec)}\|_2 \|\bar{\x}\|_2}.
\label{eq:speceq3}
\end{equation}
Combining such an observation with the prevalent intuition regarding favorable larger starting points overlaps, one can position the following (preprocessing) problem as of key importance
 \begin{equation}
\hspace{-1.2in}\bl{\mbox{\emph{\textbf{Overlap optimal spectral initializer (OptSpin):}}}} \qquad \hat{{\mathcal T}}(\bar{\y}) \triangleq   \mbox{arg}\max_{{\mathcal T}(\bar{\y})} x^{(spec)}.
\label{eq:speceq4}
\end{equation}
Consequently, the following definition is rather natural as well
 \begin{equation}
\hspace{-1.2in}\bl{\mbox{\emph{\textbf{Optimal overlap:}}}} \qquad  \hat{x}^{(spec)}\triangleq   \max_{{\mathcal T}(\bar{\y})} x^{(spec)}.
\label{eq:speceq5}
\end{equation}

%%%%%%%%%%%%%%%%%%%%%%%%%%%%%%%%%%%%%%%%%%%%%%%%%%%%%%%%%%%%%%%%%%%%%%%%%%%%%%%%%%%%%%%%%%%%%%%%
%%%%%%%%%%%%%%%%%%%%%%%%%%%%%%%%%%%%%%%%%%%%%%%%%%%%%%%%%%%%%%%%%%%%%%%%%%%%%%%%%%%%%%%%%%%%%%%%
%%%%%%%%%%%%%%%%%%%%%%%%%%%%%%%%%%%%%%%%%%%%%%%%%%%%%%%%%%%%%%%%%%%%%%%%%%%%%%%%%%%%%%%%%%%%%%%%
\subsection{Analysis of spectral method overlaps via Random Duality Theory (RDT)}
\label{sec:ubrdt}
%%%%%%%%%%%%%%%%%%%%%%%%%%%%%%%%%%%%%%%%%%%%%%%%%%%%%%%%%%%%%%%%%%%%%%%%%%%%%%%%%%%%%%%%%%%%%%%%
%%%%%%%%%%%%%%%%%%%%%%%%%%%%%%%%%%%%%%%%%%%%%%%%%%%%%%%%%%%%%%%%%%%%%%%%%%%%%%%%%%%%%%%%%%%%%%%%
%%%%%%%%%%%%%%%%%%%%%%%%%%%%%%%%%%%%%%%%%%%%%%%%%%%%%%%%%%%%%%%%%%%%%%%%%%%%%%%%%%%%%%%%%%%%%%%%

Following the initial success of \cite{CandesLS15}, subsequent works \cite{LuL17,LuoAL19,LuLi20,MondelliM19,MaDXMW21,DudejaB0M20}) recognized that, when it comes to making $x^{(spec)}$ as large as possible,  there are actually much better choices for ${\mathcal T} (\bar{\y})$ than the simple one suggested in  (\ref{eq:speceq2}). We below present a way to determine ${\mathcal T} (\bar{\y})$ that actually maximizes $x^{(spec)}$, i.e., we present a way to
 determine $\hat{{\mathcal T}}(\bar{\y})$. As the methods that we present are developed through the utilization of the Random duality theory (RDT), before proceeding with the technical presentation we briefly summarize the key principles behind the RDT machinery developed in a long series of work \cite{StojnicCSetam09,StojnicICASSP10var,StojnicISIT2010binary,StojnicICASSP10block,StojnicRegRndDlt10,StojnicGenLasso10}. We then continue by discussing concrete implementation of each of these principles within the optimal spectral initializers context of interest here.

\vspace{-.0in}\begin{center}
%\tcbset{colback=orange!40!white!40!yellow,colframe=blue!75!black,fonttitle=\bfseries,title style={left color=black, right color=cyan},interior %style={left color=yellow!10!white,right color=yellow!80!white}}
% 	\tcbset{beamer,lower separated=false, fonttitle=\bfseries, coltext=black ,
%		interior style={top color=yellow!20!white, bottom color=yellow!60!white},title style={left color=black!80!purple!60!cyan, right color=yellow!80!white},
%		width=(\linewidth-4pt)/4,before=,after=\hfill,fonttitle=\bfseries}
% 	\tcbset{beamer,lower separated=false, fonttitle=\bfseries, coltext=black ,
%		interior style={top color=yellow!20!white, bottom color=yellow!60!white},title style={left color=black, right color=red!50!blue!60!white},
%		width=(\linewidth-4pt)/4,before=,after=\hfill,fonttitle=\bfseries}
 \begin{tcolorbox}[title={\small Summary of the RDT's main principles} \cite{StojnicCSetam09,StojnicRegRndDlt10}]
\vspace{-.15in}
{\small \begin{eqnarray*}
 \begin{array}{ll}
\hspace{-.19in} \mbox{1) \emph{Finding underlying optimization algebraic representation}}
 & \hspace{-.0in} \mbox{2) \emph{Determining the random dual}} \\
\hspace{-.19in} \mbox{3) \emph{Handling the random dual}} &
 \hspace{-.0in} \mbox{4) \emph{Double-checking strong random duality.}}
 \end{array}
  \end{eqnarray*}}
\vspace{-.2in}
 \end{tcolorbox}
\end{center}\vspace{-.0in}

\noindent To make sure that the presentation is as neat as possible, we formulate all key results (including both simple and more complicated ones) as lemmas or theorems.

\vspace{.1in}

\noindent \underline{1) \textbf{\emph{Algebraic spectral initializers characterization:}}} We start by noting that $ \x^{(spec)} $ defined in (\ref{eq:speceq1}) can alternatively be obtained as
\begin{equation}
 \x^{(spec)} =\mbox{max eigenvector} \lp A^T \mbox{diag} \lp  {\mathcal T}(\bar{\y})  \rp   A\rp = \mbox{arg} \max_{\|\x\|_2=1} \x^T\lp A^T \mbox{diag} \lp  {\mathcal T}(\bar{\y})  \rp   A\rp \x.
\label{eq:rdteq0a0a0}
\end{equation}
For the completeness, we also define the corresponding maximum eigenvalue
\begin{equation}
 \lambda^{(spec)} =\mbox{max eigenvalue} \lp A^T \mbox{diag} \lp  {\mathcal T}(\bar{\y})  \rp   A\rp = \max_{\|\x\|_2=1} \x^T\lp A^T \mbox{diag} \lp  {\mathcal T}(\bar{\y})  \rp   A\rp \x.
\label{eq:rdteq0a0a0a0}
\end{equation}
From (\ref{eq:rdteq0a0a0}), we further have
\begin{eqnarray}
 \x^{(spec)} = \mbox{arg}\max_{\|\x\|_2=1,\z}  & &  \z^T \mbox{diag} \lp  {\mathcal T}(\bar{\y})  \rp   \z \nonumber \\
 \mbox{subject to } & & A\x=\z.
\label{eq:rdteq0a0a1}
\end{eqnarray}
One then also observes that rotational invariance of $A$ allows any ($A$ independent) rotation of $\bar{\x}$. Following into the footsteps of \cite{Stojnicphretreal24}, we choose the rotation that transform $\bar{\x}$ into  $\bar{\x}=[\|\bar{\x}\|_2,0,\dots,0 ]^T$. This immediately implies
\begin{eqnarray}
 \bar{\y}=A\bar{\x}=A_{:,1}\|\bar{\x}\|_2
\label{eq:rdteq0a0a2}
\end{eqnarray}
and allows to rewrite (\ref{eq:rdteq0a0a1}) as
\begin{eqnarray}
 \x^{(spec)} = \mbox{arg}\max_{\|\x\|_2=1,\z}  & &  \z^T \mbox{diag} \lp  {\mathcal T}(A_{:,1}\|\bar{\x}\|_2)  \rp   \z \nonumber \\
 \mbox{subject to } & & A\x=\z,
\label{eq:rdteq0a0a3}
\end{eqnarray}
where $A_{:,i}$ stands for the $i$-th column of $A$. Analogously to (\ref{eq:ex1a4a0}), we find it convenient to introduce the following \emph{fundamental spectral initializer}
\begin{eqnarray}
\hspace{-.2in}\bl{\mbox{\emph{\textbf{(f-spin) optimization:}}}} \qquad  \xi_s(x) \triangleq \max_{\|\x\|_2=1,\z}  & &  \z^T \mbox{diag} \lp  {\mathcal T}(A_{:,1}\|\bar{\x}\|_2)  \rp   \z \nonumber \\
 \mbox{subject to } & & A\x=\z \nonumber \\
  & & \x^T\bar{\x}=\x_1\|\bar{\x}\|_2=x.
\label{eq:rdteq0a0a3a0}
\end{eqnarray}
After setting $r\triangleq \sqrt{1-x^2}$ and noting that restriction to $\|\bar{\x}\|_2=1$ brings no loss of generality, we further have
 \begin{eqnarray}
 - \xi_s(x) = \min_{\x,\z} & & -\z^T \mbox{diag} \lp  {\mathcal T}(A_{:,1})  \rp   \z \nonumber \\
  \mbox{subject to} & &  A\x=A_{:,1}x + A_{:,2:n}\x_{2:n} = \z \nonumber \\
   & & \sum_{i=2}^{n}\x_i^2=1-x^2=r^2, \label{eq:rdteq0a1}
\end{eqnarray}
Writing Lagrangian then gives
 \begin{eqnarray}
 - \xi_s(x) = \min_{\x,\z} \max_{\y}   & &
 -\z^T \mbox{diag} \lp  {\mathcal T}(A_{:,1})  \rp   \z
  +\y^TA_{:,1}x + \y^TA_{:,2:n}\x_{2:n} -\y^T \z \nonumber \\
  \mbox{subject to}
     & & \sum_{i=2}^{n}\x_i=c-x^2=r^2. \label{eq:rdteq0a2}
\end{eqnarray}
We also set $\g^{(0)}=A_{:,1}$ and rewrite (\ref{eq:rdteq0a2}) in a more compact way
 \begin{eqnarray}
 - \xi_s(x) = \min_{\|\x_{2:n}\|_2=r,\z} \max_{\y}  \lp
  -\z^T \mbox{diag} \lp  {\mathcal T}( \g^{(0)} )  \rp   \z
  +\y^T \g^{(0)}x + \y^TA_{:,2:n}\x_{2:n} -\y^T \z \rp. \label{eq:rdteq0a3}
\end{eqnarray}
Since the above  is a fairly useful characterization of the spectral initializers, it is summarized in the following lemma.

\begin{lemma} Consider real phase retrieval (PR) problem and, for a general componentwise-wise preprocessing function ${\mathcal T}(\cdot):\mR^m\rightarrow \mR^m$, let the associated general spectral initializer, $\x^{(spec)}$, be as in (\ref{eq:speceq1}). Assume a high-dimensional linear/proportional regime
with $\alpha=\lim_{n\rightarrow\infty}\frac{m}{n}$ and let $A\in\mR^{m\times n}$ and $\g^{(0)}\triangleq A_{:,1}$.  Set $r\triangleq \sqrt{1-x^2}$. Let
$\lambda^{(spec)}$ and $\x^{(spec)}$ be the maximum eigenvalue and the associated maximum eigenvector of $A^T \mbox{diag} \lp  {\mathcal T}(\bar{\y})  \rp   A$, i.e., let $\lambda^{(spec)}$ and $\x^{(spec)}$  be as defined in  (\ref{eq:rdteq0a0a0}) and (\ref{eq:rdteq0a0a0a0}), respectively. Moreover, let $x^{(spec)}$ be as in (\ref{eq:speceq3}).  Then
 \begin{eqnarray}\label{eq:ta10}
 \frac{\lambda^{(spec)}}{n} & = & \max_{x} \frac{\xi_s(x)}{n}  =   -\min_{x} f^{(s)}_{rp}(x;A) \nonumber \\
 x^{(spec)} = \x_1^{(spec)} & = & \arg \min_{x} f^{(s)}_{rp}(x;A) \nonumber \\
\x_{2:n}^{(spec)}  & = & \arg \min_{\|\x_{2:n}\|_2=r} \lp \min_{\z} \max_{\y}  \lp
  -\z^T \mbox{diag} \lp  {\mathcal T}(\g^{(0)} )  \rp   \z
  +\y^T \g^{(0)}x + \y^TA_{:,2:n}\x_{2:n} -\y^T \z \rp \rp, \nonumber \\
\end{eqnarray}
 where
\begin{equation}\label{eq:ta11}
f^{(s)}_{rp}(x;A)\triangleq \frac{1}{n}  \min_{\|\x_{2:n}\|_2=r,\z} \max_{\y}  \lp
  -\z^T \mbox{diag} \lp  {\mathcal T}(\g^{(0)} )  \rp   \z
  +\y^T \g^{(0)}x + \y^TA_{:,2:n}\x_{2:n} -\y^T \z \rp.
\end{equation}
    \label{lemma:lemma1}
\end{lemma}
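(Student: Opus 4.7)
The statement is a purely deterministic algebraic rewriting of the spectral initializer, so no probabilistic argument is required; the plan is simply to assemble the chain of identities (\ref{eq:rdteq0a0a0})--(\ref{eq:rdteq0a3}) that precede the lemma into a proof of the three displayed equalities. I would start from the Rayleigh characterization
\[
\lambda^{(spec)} = \max_{\|\x\|_2=1}\x^T A^T \diag({\mathcal T}(\bar{\y})) A \x,
\]
rewrite it as a constrained program in the auxiliary variable $\z=A\x$ to recover (\ref{eq:rdteq0a0a1}), and then invoke rotational invariance of the Gaussian $A$ (exactly as in \cite{Stojnicphretreal24}) to align $\bar{\x}$ with the first canonical basis vector. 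This produces $\bar{\y}=\|\bar{\x}\|_2\g^{(0)}$ with $\g^{(0)}=A_{:,1}$ and reduces the problem to (\ref{eq:rdteq0a0a3}).

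Next I would parametrize the feasible unit sphere by the overlap $x \triangleq \x^T\bar{\x}=\x_1\|\bar{\x}\|_2$; since $\|\bar{\x}\|_2=1$ without loss of generality, this identifies $x$ with $\x_1$ and forces $\|\x_{2:n}\|_2=\sqrt{1-x^2}=r$. Splitting $A\x$ into its first-column and remaining-column contributions yields (\ref{eq:rdteq0a1}), and introducing Lagrange multipliers $\y$ for the linear equality $A_{:,2:n}\x_{2:n}+x\g^{(0)}=\z$ converts the problem into the min-max appearing in (\ref{eq:rdteq0a3}). After dividing by $n$ and rewriting the outer $\max$ over the quadratic in $\z$ as a $-\min$ (a pure sign change), the resulting expression is exactly $-f^{(s)}_{rp}(x;A)$ from (\ref{eq:ta11}), so that
\[
\frac{\lambda^{(spec)}}{n}=\max_x \frac{\xi_s(x)}{n}=-\min_x f^{(s)}_{rp}(x;A),
\]
which is the first claim. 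The outer minimizer $x^{\star}=\arg\min_x f^{(s)}_{rp}(x;A)$ is by construction the first coordinate of the max eigenvector after rotation, and since $\|\bar{\x}\|_2=1$ this first coordinate coincides with $x^{(spec)}$ from (\ref{eq:speceq3}), establishing the second claim. The third claim is then immediate: once $x=x^{\star}$ is fixed, $\x_{2:n}^{(spec)}$ is precisely the inner argmin of (\ref{eq:rdteq0a3}).

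The only points requiring some care --- rather than a genuine obstacle --- are that \textbf{(i)} the inner maximum over $\y$ must correctly collapse onto the affine set $A_{:,2:n}\x_{2:n}+x\g^{(0)}=\z$; this holds because $\z$ is unconstrained and $\y$ enters linearly, so strong duality for the linear equality legitimately exchanges the order of inner $\min_{\z}$ and $\max_{\y}$; and \textbf{(ii)} the sign conventions must be tracked when flipping the quadratic $\max_{\z}$ into $-\min_{\z}$ to match the definition of $f^{(s)}_{rp}$. No random-matrix input is needed at this stage --- the lemma is intended as the deterministic algebraic starting point on which the subsequent RDT machinery (the random dual, its concentration, and the strong duality check) will be erected.
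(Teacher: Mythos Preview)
Your proposal is correct and follows exactly the paper's approach: the paper's proof is a single sentence pointing back to the chain (\ref{eq:rdteq0a0a0})--(\ref{eq:rdteq0a3}) and noting that (\ref{eq:ta11}) is just a $1/n$ scaling of (\ref{eq:rdteq0a3}). Your write-up is in fact more detailed than the paper's, since you explicitly flag the strong-duality and sign-tracking points that the paper leaves implicit.
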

\begin{proof}
Follows automatically from the above discussion after recognizing that (\ref{eq:ta11}) is cosmetically  scaled (\ref{eq:rdteq0a3}).
\end{proof}

The optimization program on the right hand side of (\ref{eq:ta11}) constitutes the so-called \emph{random primal}. The corresponding \emph{random dual} is discussed next.

\vspace{.1in}
\noindent \underline{2) \textbf{\emph{Determining the random dual:}}} Following the typical practice within the RDT, the concentration of measure is utilized as well. This basically means that for any fixed $\epsilon >0$,  one has (see, e.g. \cite{StojnicCSetam09,StojnicRegRndDlt10,StojnicICASSP10var})
\begin{equation*}
\lim_{n\rightarrow\infty}\mP_{A}\left (\frac{|f^{(s)}_{rp}(x;A)-\mE_{A}(f^{(s)}_{rp}(x;A)|}{\mE_{A}(f^{(s)}_{rp}(x;A)}>\epsilon\right )\longrightarrow 0.\label{eq:ta15}
\end{equation*}
The so-called random dual theorem that we state below is another fundamentally important ingredient of the RDT machinery.
\begin{theorem} Assume the setup of Lemma \ref{lemma:lemma1} with the elements of $A\in\mR^{m\times n}$ ($\g^{(0)}\in\mR^{m\times 1}$ and $A_{:,2:n}\in\mR^{m\times (n-1)}$), $\g^{(1)}\in\mR^{m\times 1}$, and  $\h^{(1)}\in\mR^{(n-1)\times 1}$  being iid standard normals. Let $x$ be a nonnegative  scalar ($0\leq x \leq 1$) and set $r\triangleq \sqrt{1-x^2}$. Let
\vspace{-.0in}
\begin{eqnarray}
\cG & \triangleq & \lp A,\g^{(1)},\h^{(1)}\rp = \lp\g^{(0)},A_{:,2:n},\g^{(1)},\h^{(1)}\rp  \nonumber \\
\phi^{(s)}(\x,\z,\y) & \triangleq &
 \lp
  -\z^T \mbox{diag} \lp  {\mathcal T}(\g^{(0)} )  \rp   \z
  +\y^T \g^{(0)}x   +  \y^T \g^{(1)}\|\x_{2:n}\|_2 + \lp \x_{2:n}  \rp^T\h^{(1)}\|\y\|_2  -\y^T  \z  \rp
\nonumber \\
 f^{(s)}_{rd}(x;\cG) & \triangleq &
\frac{1}{n}  \min_{\|\x_{2:n}\|_2=r,\z} \max_{\|\y\|_2=r_y,r_y>0}  \phi^{(s)}(\x,\z,\y)
  \nonumber \\
 \phi^{(s)}_0 & \triangleq & \lim_{n\rightarrow\infty} \mE_{\cG} f^{(s)}_{rd}(x;\cG).\label{eq:ta16}
\vspace{-.0in}\end{eqnarray}
One then has \vspace{-.02in}
\begin{eqnarray}
  \lim_{n\rightarrow\infty}\mP_{ A } (f^{(s)}_{rp} (x; A )   >  \phi^{(s)}_0)\longrightarrow 1.\label{eq:ta17a0}
\end{eqnarray}
  \label{thm:thm1}
\end{theorem}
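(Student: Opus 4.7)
The proof rests on two classical tools --- Gordon's Gaussian min--max comparison inequality and the Lipschitz concentration of Gaussian measure --- which together form the statistical backbone of RDT. The goal is to show that the random primal $f^{(s)}_{rp}(x;A)$ defined in (\ref{eq:ta11}) is asymptotically bounded below, with overwhelming probability, by the mean $\phi^{(s)}_0$ of the random dual.

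My plan proceeds in three steps. First, I would condition on $\g^{(0)}=A_{:,1}$ so that the quadratic term $-\z^T\diag(\mathcal{T}(\g^{(0)}))\z$ and the linear term $\y^T\g^{(0)}x$ become deterministic functions of the conditioning, while $\y^T A_{:,2:n}\x_{2:n}$ remains a Gaussian bilinear form in $A_{:,2:n}$ that is independent of $\g^{(0)}$. Because the $\y$-maximization in (\ref{eq:ta11}) is unconstrained, restricting it to $\|\y\|_2=r_y$ and separately optimizing over $r_y>0$ can only decrease its value, so the resulting restricted min--max is a lower bound on $f^{(s)}_{rp}(x;A)$ that now admits a compact treatment.

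Second, I would invoke Gordon's min--max comparison theorem on the compact min--max over $\|\x_{2:n}\|_2=r$, $\z$, and $\|\y\|_2=r_y$: the bilinear $\y^T A_{:,2:n}\x_{2:n}$ is replaced by the separable Gaussian surrogate $\y^T\g^{(1)}\|\x_{2:n}\|_2+\x_{2:n}^T\h^{(1)}\|\y\|_2$ with $\g^{(1)},\h^{(1)}$ independent standard Gaussians, while all other summands --- deterministic functions of $(\g^{(0)},\x,\z,\y)$ --- are carried through unchanged. Because Gordon's inequality preserves the min--max order in expectation, integrating out first $A_{:,2:n}$ and then $\g^{(0)}$, optimizing over $r_y$ on both sides, and passing to the limit $n\to\infty$ directly yields $\liminf_{n\to\infty}\mE_A f^{(s)}_{rp}(x;A)\geq \phi^{(s)}_0$ by the very definition of $\phi^{(s)}_0$ in (\ref{eq:ta16}). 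Gaussian Lipschitz concentration, applied to the primal after compactification, then upgrades this mean bound to the high-probability statement in (\ref{eq:ta17a0}).

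The main obstacle I expect is making the compactness reduction rigorous. Because $-\z^T\diag(\mathcal{T}(\g^{(0)}))\z$ is generally concave in $\z$ on coordinates where $\mathcal{T}(\g^{(0)})>0$, the bare $\z$-minimization would yield $-\infty$, and only the saddle-point interplay with the $\y$-linear term $-\y^T\z$ keeps the problem well posed. One therefore needs to confirm, using the implicit $A\x=\z$ constraint enforced by unbounded $\y$ (and surviving the restriction to $\|\y\|_2=r_y$ for sufficiently large $r_y$), that the optimal $\z$ is pinned to a bounded region; equivalently, one can introduce an auxiliary truncation on $\z$ whose induced error vanishes as the truncation radius is sent to infinity. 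Once this is settled, Gordon's theorem and Gaussian Lipschitz concentration both apply in their textbook form, and the rest of the argument is routine.
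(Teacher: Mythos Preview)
Your proposal is correct and follows essentially the same route as the paper: condition on $\g^{(0)}=A_{:,1}$, then apply Gordon's Gaussian min--max comparison to the remaining bilinear form $\y^T A_{:,2:n}\x_{2:n}$, with Gaussian Lipschitz concentration supplying the high-probability upgrade. The paper's own proof is a two-line invocation of exactly these ingredients (Gordon plus conditioning on $\g^{(0)}$); your additional discussion of the $\z$-compactness technicality is a legitimate concern that the paper simply leaves implicit.
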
\vspace{-.17in}
\begin{proof}
Follows automatically after applying the Gordon's comparison theorem (see, e.g., Theorem B in \cite{Gordon88}) and  conditioning on $\g^{(0)}$. Gordon's theorem can be deduced as a special case of a series of results Stojnic obtained in \cite{Stojnicgscomp16,Stojnicgscompyx16} (see Theorem 1, Corollary 1, and Section 2.7.2 in \cite{Stojnicgscomp16} as well as Theorem 1, Corollary 1, and Section 2.3.2 in \cite{Stojnicgscompyx16}).
\end{proof}

\vspace{.1in}
\noindent \underline{3) \textbf{\emph{Handling the random dual:}}} To handle the above random dual, we rely on a series of techniques introduced in \cite{StojnicCSetam09,StojnicICASSP10var,StojnicISIT2010binary,StojnicICASSP10block,StojnicRegRndDlt10}. We first note that the optimizations over $\x$ and $\y$ in  (\ref{eq:ta16}) are relatively simple and allow to write
\begin{equation}
 f^{(s)}_{rd}(x;\cG) =
\frac{1}{n}  \min_{\z} \max_{r_y>0}
 \lp
  -\z^T \mbox{diag} \lp  {\mathcal T}(\g^{(0)} )  \rp   \z
  +\|\g^{(0)}x   +  \g^{(1)}r -\z\|_2 r_y - \|\h^{(1)} \|_2 r r_y \rp.
\label{eq:hrd1}
 \end{equation}
One then also has
\begin{eqnarray}
 f^{(s)}_{rd}(x;\cG)
  &  =  &
\frac{1}{n}  \min_{\z} \max_{r_y>0}
 \lp
   -\z^T \mbox{diag} \lp  {\mathcal T}(\g^{(0)} )  \rp   \z
  +\|\g^{(0)}x   +  \g^{(1)}r -\z\|_2^2 r_y - \|\h^{(1)} \|_2^2 r^2 r_y \rp \nonumber \\
  &  \geq   &
\frac{1}{n}  \max_{r_y>0} \min_{\z}
 \lp
  -\z^T \mbox{diag} \lp  {\mathcal T}(\g^{(0)} )  \rp   \z
  +\|\g^{(0)}x   +  \g^{(1)}r -\z\|_2^2 r_y - \|\h^{(1)} \|_2^2 r^2 r_y \rp
  \nonumber \\
  &  =   &
\frac{1}{n}  \max_{r_y>0} \min_{\z_i}
 \lp
\sum_{i=1}^{m}
\lp
-\tau_i\z_i^2  +\|\g_i^{(0)}x   +  \g_i^{(1)}r -\z_i\|_2^2 r_y \rp
  - \|\h^{(1)} \|_2^2 r^2 r_y \rp,
\label{eq:hrd2}
 \end{eqnarray}
 where we set
\begin{eqnarray}
\tau =  {\mathcal T}(\g^{(0)} ) .
\label{eq:hrd2a0}
 \end{eqnarray}
Statistical identicalness over $i$ together with concentrations then gives
\begin{eqnarray}
 \phi^{(s)}_0 & \triangleq & \lim_{n\rightarrow\infty} \mE_{\cG} f^{(s)}_{rd}(\cG)
\geq
  \max_{r_y>0} \mE_{\cG}  \min_{\z_i} \cL_1(r_y),
\label{eq:hrd6}
 \end{eqnarray}
where
\begin{eqnarray}
\cL_1(r_y)
  &  = &
\alpha  \lp
 -\tau_i \z_i^2
  + \lp \g_i^{(0)}x   +  \g_i^{(1)}r -\z_i \rp^2 r_y \rp
  -  r^2 r_y.
\label{eq:hrd7}
 \end{eqnarray}
When  $\r_y> \tau_i$ finding the following derivative turns out to be useful
\begin{eqnarray}
\frac{d\cL_1(r_y)}{d\z_i}
  &  = &
\alpha \lp -2\tau_i\z_i
  -2\lp \g_i^{(0)}x   +  \g_i^{(1)}r -\z_i \rp  r_y\rp.
\label{eq:hrd7a0}
 \end{eqnarray}
One can then equal the above derivative to zero while keeping in mind the condition $\r_y\geq \tau_i$ and determine the optimal $\z_i$ as
\begin{eqnarray}
\hat{\z}_i= \begin{cases}
              \frac{r_y}{-\tau_i+r_y} \lp \g_i^{(0)}x   +  \g_i^{(1)}r\rp
, & \mbox{if } r_y> \tau_i \\
              0, & \mbox{otherwise}.
            \end{cases}
\label{eq:hrd7a1}
 \end{eqnarray}
A combination of  (\ref{eq:hrd7}) and  (\ref{eq:hrd7a1}) further gives
\begin{eqnarray}
\min_{\z_i} \cL_1(r_y)
     &  = &
     \begin{cases}
       -\alpha
\frac{\tau_i r_y}{-\tau_i+r_y} \lp \g_i^{(0)}x   +  \g_i^{(1)}r  \rp^2
  -  r^2 r_y, & \mbox{if } r_y>\tau_i \\
              \alpha
r_y \lp \g_i^{(0)}x   +  \g_i^{(1)}r  \rp^2
  -  r^2 r_y, & \mbox{otherwise}.
     \end{cases}
\label{eq:hrd7a2}
 \end{eqnarray}
 For the time being we assume $r_y>\tau_i$ and after setting
\begin{eqnarray}
f_q & \triangleq & - \mE_{\cG} \frac{\tau_i r_y}{-\tau_i+r_y}   \lp \g_i^{(0)}x   +  \g_i^{(1)}r  \rp^2,
\label{eq:hrd7a2a0}
 \end{eqnarray}
 easily write
\begin{eqnarray}
 \mE_{\cG} \min_{\z_i} \cL_1(r_y)
       &  = &
\alpha
   f_q  -  r^2 r_y.
\label{eq:hrd7a3}
 \end{eqnarray}
After plugging this back in (\ref{eq:hrd6}) one has
\begin{eqnarray}
 \phi^{(s)}_0 & \triangleq & \lim_{n\rightarrow\infty} \mE_{\cG} f^{(s)}_{rd}(\cG)
    \geq
  \max_{r_y>0}
  \lp
  \alpha
\ f_q
  -  r^2 r_y \rp.
\label{eq:hrd7a4}
 \end{eqnarray}
To further analyze the optimization problem on the right hand side
one computes the derivative of the expression under $\max$
\begin{equation}
 \frac{d  \lp
  \alpha
 f_q
  -  r^2 r_y \rp}{dr_y} =   \alpha
 \frac{df_q}{dr_y}
  -  r^2
  =   -\alpha
 \frac{d\lp \mE_{\cG} \frac{\tau_i r_y}{-\tau_i+r_y}   \lp \g_i^{(0)}x   +  \g_i^{(1)}r  \rp^2 \rp}{dr_y}
  -  r^2
    =   \alpha
 \mE_{\cG} \frac{\tau_i^2   \lp \g_i^{(0)}x   +  \g_i^{(1)}r  \rp^2 }{(-\tau_i+r_y)^2}
  -  r^2.
\label{eq:hrd7a5}
 \end{equation}
Keeping in mind that $\tau_i$ depends only on $\g_i^{(0)}$ (and not on $\g_i^{(1)}$), one can integrate out $\g_i^{(1)}$ and rewrite (\ref{eq:hrd7a5}) as
\begin{equation}
 \frac{d  \lp
  \alpha
 f_q
  -  r^2 r_y \rp}{dr_y}     =   \alpha
 \mE_{\cG} \frac{\tau_i^2   \lp \lp \g_i^{(0)}\rp^2(1-r^2)   + r^2  \rp }{(-\tau_i+r_y)^2}
  -  r^2.
\label{eq:hrd7a5a0}
 \end{equation}
As (\ref{eq:ta10}) and (\ref{eq:ta17a0}) state, an additional minimization over $r$ of the righthand side of (\ref{eq:hrd7a4}) is ultimately needed as well. To that end we find it useful to complement the above $r_y$ derivative with its $r$-derivative analogue. Since $r^2$ appears as a variable rather than $r$, formally speaking,  we find it more convenient to compute the $r^2$-derivative
\begin{equation}
 \frac{d  \lp
  \alpha
 f_q
  -  r^2 r_y \rp}{dr^2} =   \alpha
 \frac{df_q}{dr^2}
  -  r_y
  =   -\alpha
 \frac{d\lp \mE_{\cG}\lp \frac{\tau_i r_y}{-\tau_i+r_y}   \lp \g_i^{(0)}x   +  \g_i^{(1)}r  \rp^2 \rp\rp}{dr^2}
  -  r_y
    =   \alpha
 \mE_{\cG} \frac{\tau_i r_y  \lp  \lp \g_i^{(0)} \rp^2  -  1  \rp }{-\tau_i+r_y}
  -  r_y,
\label{eq:hrd7a5a0a0}
 \end{equation}
where the second equality follows after recognizing once again that $\g_i^{(1)}$ can be integrated out. Equalling (\ref{eq:hrd7a5a0}) and (\ref{eq:hrd7a5a0a0}) to zero is then sufficient to determine optimal $\hat{r}_y$ and $\hat{r}$ and provided that $\hat{r}_y>\tau_i$ it is also sufficient to determine the tightest (with respect to $r$) right hand side of (\ref{eq:hrd7a4}). Assuming that all of this indeed happens for a choice of $\tau_i$, one still needs to determine the optimal choice of $\tau_i$ that makes $\hat{r}$ as small as possible (that would automatically make $x^{(spec)}$ as large as possible and effectively determine $\hat{x}^{(spec)}$ in (\ref{eq:speceq5})). We now show how this is done. Let $\tau_i$, $\hat{r}_y$, and $\hat{r}$ be such that the derivatives in (\ref{eq:hrd7a5a0}) and (\ref{eq:hrd7a5a0a0}) are indeed zero.  From (\ref{eq:hrd7a5a0}) we have
\begin{equation}
\frac{1}{\hat{r}^2}   =
\frac{\frac{1}{\alpha} + \mE_{\cG} \lp \frac{\tau_i^2   \lp \lp \g_i^{(0)}\rp^2   -1  \rp } {(-\tau_i+r_y)^2} \rp }  { \mE_{\cG} \lp  \frac{\tau_i^2  \lp \g_i^{(0)}\rp^2  }{(-\tau_i+r_y)^2}   \rp} =
1+\frac{\frac{1}{\alpha} - \mE_{\cG} \lp \frac{\tau_i^2  } {(-\tau_i+\hat{r}_y)^2} \rp }  { \mE_{\cG} \lp  \frac{\tau_i^2  \lp \g_i^{(0)}\rp^2  }{(-\tau_i+\hat{r}_y)^2}   \rp}.
\label{eq:hrd7a5a0a1}
 \end{equation}
and from (\ref{eq:hrd7a5a0a0})
\begin{equation}
 \mE_{\cG} \lp \frac{\tau_i  \lp  \lp \g_i^{(0)} \rp^2  -  1  \rp }{-\tau_i+\hat{r}_y} \rp
  = \frac{1}{\alpha}.
\label{eq:hrd7a5a0a2}
 \end{equation}
After setting
\begin{equation}
\zeta = \frac{\tau_i  }{-\tau_i+\hat{r}_y}, \quad  b=  \lp \g_i^{(0)} \rp^2  -  1, \quad a= \lp \g_i^{(0)} \rp^2,
\label{eq:hrd7a5a0a3}
 \end{equation}
 minimal $\hat{r}$ will be obtained as
\begin{equation}
\hat{r}=\frac{1}{\sqrt{1+\hat{d}}},
\label{eq:hrd7a5a0a3a0}
 \end{equation}
 where
\begin{eqnarray}
\hat{d} =\max_{\zeta}  & &  \frac{\frac{1}{\alpha}  - \mE_{\cG} \lp \zeta^2 \rp } {\mE_{\cG} \lp \zeta^2 a \rp} \nonumber \\
\mbox{subject to} & & \mE_{\cG} \lp \zeta b \rp =\frac{1}{\alpha},
  \label{eq:hrd7a5a0a4}
 \end{eqnarray}
 The above is a simple linearly constrained quasi-convex program that can be solved in a closed form. One can proceed by keeping integral notation. However, to be more aligned with a common optimization practice we set
\begin{eqnarray}
\theta= \frac{e^{-\frac{\lp \g_i^{(0)}\rp^2}{2}}d\g_i^{(0)}}{\sqrt{2\pi}},
  \label{eq:hrd7a5a0a4a0}
 \end{eqnarray}
assume $d\g_i^{(0)}\rightarrow 0$ and write a discretized analogue of (\ref{eq:hrd7a5a0a4})
\begin{eqnarray}
\hat{d} = \lim_{l\rightarrow\infty}\max_{\zeta}  & &  \frac{\frac{1}{\alpha}  - \sum_{j=1}^{l} \zeta_j^2\theta_j } {\sum_{j=1}^{l} \zeta_j^2 a_j\theta_j} \nonumber \\
\mbox{subject to} & & \sum_{j=1}^{l}\zeta_j b_j\theta_j =\frac{1}{\alpha}.
  \label{eq:hrd7a5a0a4a1}
 \end{eqnarray}
 A cosmetic change gives
\begin{eqnarray}
\hat{d} = \lim_{l\rightarrow\infty}\max_{\zeta}  & & d \nonumber \\
\mbox{subject to}  & &
 \frac{1}{\alpha}  - \sum_{j=1}^{l} \zeta_j^2\theta_j  \geq d \sum_{j=1}^{l} \zeta_j^2 a_j\theta_j \nonumber \\
& & \sum_{j=1}^{l}\zeta_j b_j\theta_j =\frac{1}{\alpha}.
  \label{eq:hrd7a5a0a4a2}
 \end{eqnarray}
 Clearly, optimal $d$ (i.e., $\hat{d}$) will be obtained if
\begin{eqnarray}
 \frac{1}{\alpha}  = \lim_{l\rightarrow\infty}\min_{\zeta}  & &
  \sum_{j=1}^{l} \zeta_j^2 (\hat{d}a_j+1)\theta_j
  \nonumber \\
\mbox{subject to} & & \sum_{j=1}^{l}\zeta_j b_j\theta_j =\frac{1}{\alpha}.
  \label{eq:hrd7a5a0a4a2}
 \end{eqnarray}
The above is a simple linearly constrained quadratic program. To solve it, we write the Lagrangian
\begin{eqnarray}
\cL_s(\nu) =    \sum_{j=1}^{l} \zeta_j^2 (\hat{d}a_j+1)\theta_j -\nu\sum_{j=1}^{l}\zeta_j b_j\theta_j +\nu\frac{1}{\alpha}.
 \label{eq:hrd7a5a0a4a3}
 \end{eqnarray}
Strong duality then gives
\begin{equation}
 \frac{1}{\alpha}  = \lim_{l\rightarrow\infty}\min_{\zeta} \max_{\nu}  \cL_s(\nu)= \lim_{l\rightarrow\infty}  \max_{\nu}\min_{\zeta}  \cL_s(\nu)= \lim_{l\rightarrow\infty}  \max_{\nu}\min_{\zeta}
  \lp \sum_{j=1}^{l} \zeta_j^2 (\hat{d}a_j+1)\theta_j -\nu\sum_{j=1}^{l}\zeta_j b_j\theta_j +\nu\frac{1}{\alpha}\rp.
  \label{eq:hrd7a5a0a4a4}
 \end{equation}
Taking derivative with respect to $\zeta_j$ gives
\begin{eqnarray}
 \frac{d\cL_s(\nu)}{\d\zeta_j} = 2\zeta_j (\hat{d}a_j+1)\theta_j -\nu b_j\theta_j.
  \label{eq:hrd7a5a0a4a5}
 \end{eqnarray}
Equaling the derivative to zero and keeping in mind that $\theta_j\geq 0$, one finds for the optimal $\zeta_j$
\begin{eqnarray}
\hat{\zeta}_j = \frac{\nu b_j}{2(\hat{d}a_j+1)}.
  \label{eq:hrd7a5a0a4a6}
 \end{eqnarray}
Plugging the above $\hat{\zeta}$ in (\ref{eq:hrd7a5a0a4a4}) further gives
\begin{equation}
 \frac{1}{\alpha}    = \lim_{l\rightarrow\infty}  \max_{\nu} \lp -\nu^2\sum_{j=1}^{l} \frac{b_j^2\theta_j}{4(\hat{d}a_j+1)} +\nu\frac{1}{\alpha}\rp.
  \label{eq:hrd7a5a0a4a7}
 \end{equation}
After solving the residual maximization, one obtains for the optimal $\nu$
\begin{equation}
\hat{\nu} = \frac{1}{2}\frac{\frac{1}{\alpha}}{ \lim_{l\rightarrow\infty} \sum_{j=1}^{l} \frac{b_j^2\theta_j}{4(\hat{d}a_j+1)} }.
  \label{eq:hrd7a5a0a4a8}
 \end{equation}
Plugging the above $\hat{\nu}$ in (\ref{eq:hrd7a5a0a4a7}) then gives
\begin{equation}
 \frac{1}{\alpha}
 =
 \frac{1}{4}\frac{\frac{1}{\alpha^2}}{ \lim_{l\rightarrow\infty} \sum_{j=1}^{l} \frac{b_j^2\theta_j}{4(\hat{d}a_j+1)} },
  \label{eq:hrd7a5a0a4a9}
 \end{equation}
which, after returning to integral representation, means that optimal $d$, i.e., $\hat{d}$ is such that
\begin{equation}
 \lim_{l\rightarrow\infty}  \sum_{j=1}^{l} \frac{b_j^2\theta_j}{(\hat{d}a_j+1)}
 =
\frac{1}{\alpha} \quad  \Longleftrightarrow \quad  \mE_{\cG} \frac{\lp\lp \g_i^{(0)} \rp^2  -  1\rp^2}{\hat{d}\lp \g_i^{(0)} \rp^2  +  1}
 =
\frac{1}{\alpha}.
  \label{eq:hrd7a5a0a4a10}
 \end{equation}
Moreover, a combination of (\ref{eq:hrd7a5a0a4a7}) and (\ref{eq:hrd7a5a0a4a9}) gives
\begin{equation}
\hat{\nu} =2.  \label{eq:hrd7a5a0a4a11}
 \end{equation}
Plugging this value for optimal $\nu$ in (\ref{eq:hrd7a5a0a4a7}) we also find the optimal form of $\zeta$ (without the residual dependence on $\nu$)
\begin{eqnarray}
\hat{\zeta}_j = \frac{b_j}{(\hat{d}a_j+1)}.
  \label{eq:hrd7a5a0a4a12}
 \end{eqnarray}
Combining (\ref{eq:hrd7a5a0a4a12}) and (\ref{eq:hrd7a5a0a3}) and switching back to integral form one has that optimal $\tau_i$, $\hat{\tau}_i$ satisfies
\begin{equation}
 \frac{\hat{\tau}_i  }{-\hat{\tau}_i+\hat{r}_y} = \frac{\lp \g_i^{(0)} \rp^2  -  1}{(\hat{d}\lp \g_i^{(0)} \rp^2+1)},
\label{eq:hrd7a5a0a4a13}
 \end{equation}
or alternatively
\begin{equation}
\hat{\tau}_i = \frac{\hat{r}_y}{\hat{d}+1}\frac{\lp \g_i^{(0)} \rp^2  -  1}{\lp \g_i^{(0)} \rp^2},
\label{eq:hrd7a5a0a4a14}
 \end{equation}
Recalling on (\ref{eq:hrd2a0}) we obtain for the optimal preprocessing function ${\mathcal T}(\cdot)$
\begin{eqnarray}
 \hat{{\mathcal T}}(\g^{(0)} ) =\frac{\hat{r}_y}{\hat{d}+1}\frac{\lp \g_i^{(0)} \rp^2  -  1}{\lp \g_i^{(0)} \rp^2}.
\label{eq:hrd7a5a0a4a15}
 \end{eqnarray}
 One should note that since  $\hat{{\mathcal T}}(\cdot)$ is applied on $\bar{\y}$ (only measurements $\bar{\y}$ are available in phase retrieval problem) and since $\bar{\y}=0$ with probability 1 in the context that we consider here, potential singularities have no effect on any of the above calculations and associated algorithmic implementations. Also, as stated earlier (right after (\ref{eq:hrd7a2})), the above holds assuming $\hat{r}_y>\hat{\tau}_i$. Since scaling $ \hat{{\mathcal T}}(\g^{(0)} ) $ by any constant does not affect the optimal $\hat{x}^{(spec)}$, one can choose $r_y=\hat{d}+1$, where $\hat{d}$ is the solution of   (\ref{eq:hrd7a5a0a4a10}), which then gives
\begin{eqnarray}
 \hat{{\mathcal T}}(\g^{(0)} ) = \frac{\lp \g_i^{(0)} \rp^2  -  1}{\lp \g_i^{(0)} \rp^2}
 = 1 - \frac{ 1}{\lp \g_i^{(0)} \rp^2}\leq 1+ \hat{d} =\hat{r}_y,
\label{eq:hrd7a5a0a4a16}
 \end{eqnarray}
 which ensures that all of the above is indeed correct. Moreover, after introducing $\hat{\gamma}=\hat{d}+1$, one from (\ref{eq:hrd7a5a0a3a0}) and  (\ref{eq:hrd7a5a0a4a10}) finds the optimal overlap
\begin{eqnarray}
\hat{x}^{(spec)}=\sqrt{1-\frac{1}{\hat{\gamma}}},
\label{eq:hrd7a5a0a4a17}
 \end{eqnarray}
where $\hat{\gamma}$ is the solution of
\begin{eqnarray}
 \frac{\hat{\gamma}}{(\hat{\gamma}-1)^3}\left (\frac{\hat{\gamma}}{2} e^{\frac{1}{2(\hat{\gamma}-1)}}\sqrt{2(\hat{\gamma}-1)\pi}\mbox{erfc}\left ( \frac{1}{\sqrt{2(\hat{\gamma}-1)}} \right ) -(\hat{\gamma}-1) \right)=\frac{1}{\alpha}.
\label{eq:hrd7a5a0a4a18}
 \end{eqnarray}

It is not that difficult to check that $ \hat{{\mathcal T}}(\cdot)$  from (\ref{eq:hrd7a5a0a4a16})  precisely matches the optimal choice  suggested in \cite{MondelliM19} and proven for any $\alpha$ in \cite{LuoAL19} (optimality of $\hat{{\mathcal T}}(\cdot)$ was proven for $\alpha=\frac{1}{2}$ in the limiting scenario corresponding to the so-called \emph{weak threshold} of spectral methods in \cite{MondelliM19} as well). It should be noted that the methodology that we presented above is very generic. It can be used to incorporate various forms of further structuring of unknown signal $\x$. This is done by following line by line practice developed within compressed sensing context in \cite{StojnicCSetam09,StojnicICASSP10block,StojnicICASSP10knownsupp,StojnicICASSP10var,Stojnicbinary16asym,StojnicISIT2010binary}. Among others, examples of structuring include sparse, block-sparse, binary, partially observed and many other signals. Spectral methods, on the other hand, rely on  powerful tools from random matrix or free probability theory and as such are often restricted to eigenvalues/eigenvectors related considerations.

  \vspace{.1in}
\noindent \underline{4) \textbf{\emph{Double checking the strong random duality:}}} As just stated, the above results match the optimal ones from \cite{LuoAL19} and one has that the strong random duality is in place as well (i.e., no further RDT lifting is needed).

The above spectral initializers are overlap optimal, i.e., they produce the smallest $r$ and the largest $x^{(spec)}$.
Numerical results for the optimal overlap are shown (as a function of sample complexity ratio) in Figure \ref{fig:fig1}.  In the following section we discuss how the optimal spectral initializers relate to phase transitions of \emph{descending} phase retrieval algorithms studied in companion paper \cite{Stojnicphretreal24}. To illustrate the relation we find it useful to select two particular points from the figure, $(\alpha,x)=(1.4,0.6439)$ and $(\alpha,x)=(1.6,0.7055)$.

\begin{figure}[h]
%\begin{minipage}[b]{.5\linewidth}
\centering
\centerline{\includegraphics[width=1\linewidth]{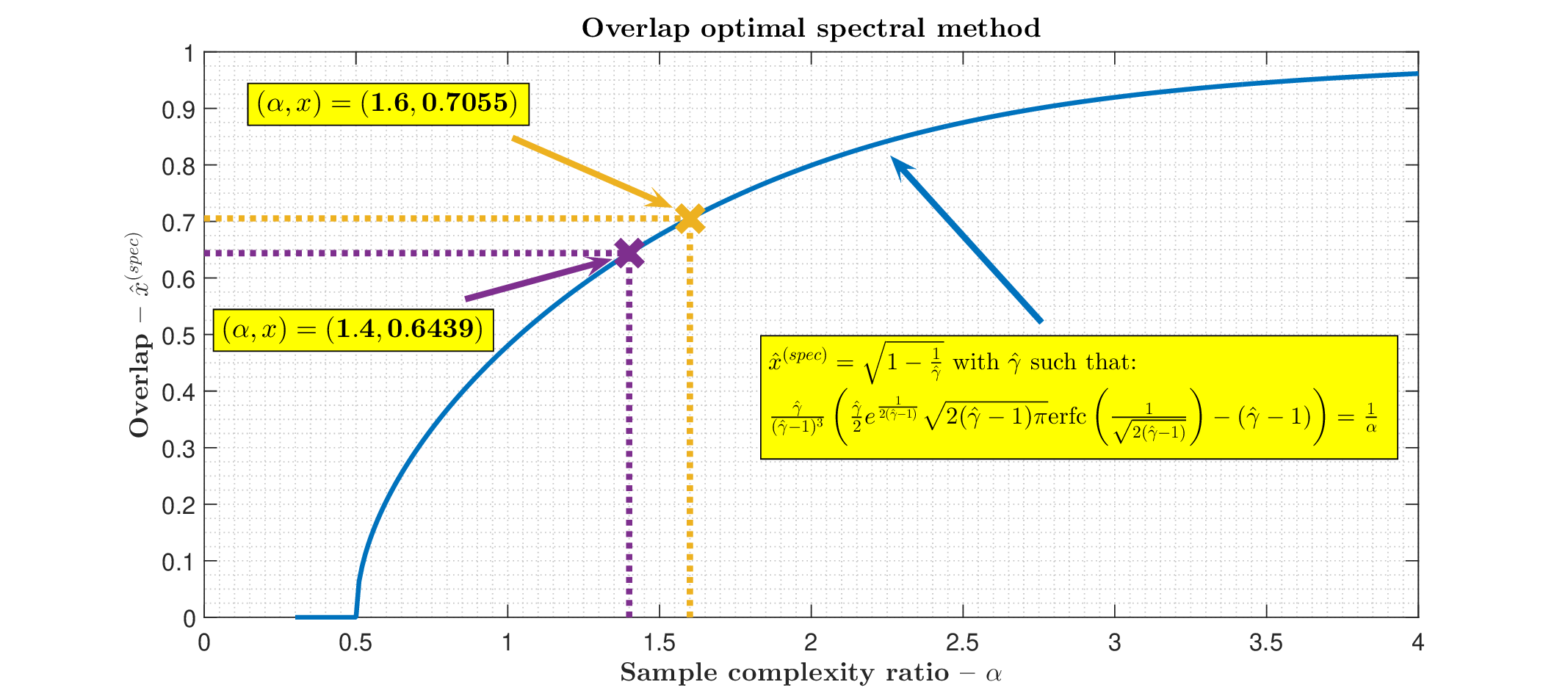}}
%\end{minipage}
%\begin{minipage}[b]{.5\linewidth}
%\centering
%\centerline{\epsfig{figure=finprerral08.eps,width=9cm,height=6.5cm}}
%\end{minipage}
\caption{Optimal spectral initializers overlap $\hat{x}^{(spec)}$ a function of $\alpha$}
\label{fig:fig1}
\end{figure}

%%%%%%%%%%%%%%%%%%%%%%%%%%%%%%%%%%%%%%%%%%%%%%%%%%%%%%%%%%%%%%%%%%%%%%%%%%%%%%%%%%%%%%%%%%%%%%%%
%%%%%%%%%%%%%%%%%%%%%%%%%%%%%%%%%%%%%%%%%%%%%%%%%%%%%%%%%%%%%%%%%%%%%%%%%%%%%%%%%%%%%%%%%%%%%%%%
%%%%%%%%%%%%%%%%%%%%%%%%%%%%%%%%%%%%%%%%%%%%%%%%%%%%%%%%%%%%%%%%%%%%%%%%%%%%%%%%%%%%%%%%%%%%%%%%
\section{Optimal spectral initializers and phase retrieval phase transitions}
\label{sec:algimp}
%%%%%%%%%%%%%%%%%%%%%%%%%%%%%%%%%%%%%%%%%%%%%%%%%%%%%%%%%%%%%%%%%%%%%%%%%%%%%%%%%%%%%%%%%%%%%%%%
%%%%%%%%%%%%%%%%%%%%%%%%%%%%%%%%%%%%%%%%%%%%%%%%%%%%%%%%%%%%%%%%%%%%%%%%%%%%%%%%%%%%%%%%%%%%%%%%
%%%%%%%%%%%%%%%%%%%%%%%%%%%%%%%%%%%%%%%%%%%%%%%%%%%%%%%%%%%%%%%%%%%%%%%%%%%%%%%%%%%%%%%%%%%%%%%%

We first recall on key phase retrieval phase transition results obtained in \cite{Stojnicphretreal24}.

\begin{theorem} Consider the fundamental phase retrieval optimization (f-pro) from (\ref{eq:ex1a4a0}). Assume a high dimensional linear regime with $\alpha=\lim_{\rightarrow\infty} \frac{m}{n}$ and let the elements of $A\in\mR^{m\times n}$  be iid standard normals. For two positive scalars $c$ and $x$  ($0\leq x \leq c$), let $\xi(c,x)$ be as in (\ref{eq:ex1a4a0}). Set $r\triangleq \sqrt{c-x^2}$. One then has \vspace{-.0in}
\begin{eqnarray}
  \lim_{n\rightarrow\infty}\mP_{ A } \lp \frac{\xi(c,x)}{n}  \geq  \bar{\phi}_0  \geq   \phi_0 \rp \longrightarrow 1,\label{eq:ta17a0}
\end{eqnarray}
where $\phi_0$ and $\bar{\phi}_0$ are the so-called plain and lifted RDT estimates. In particular, for $\phi_0$ we have
\begin{eqnarray}
 \phi_0 = \max\lp \sqrt{\alpha f_q}
  -  r,0\rp^2.
\label{eq:thm1ahrd7a7}
 \end{eqnarray}
where
\begin{eqnarray}
 f_q   =  \frac{2}{\sqrt{2\pi}} \int_{0}^{\infty} \lp  (1+c) + 2A \erf\lp \frac{C}{\sqrt{2}}\rp   - 2B \frac{e^{-\frac{C^2}{2}}}{\sqrt{2\pi}}   \rp e^{-\frac{g_1^2}{2}}dg_1.
 \label{eq:thm1ahrd7a9}
\end{eqnarray}
and \begin{eqnarray}
A  =   g_1^2\sqrt{c-r^2}, \quad B  =  g_1r, \quad C  =  -g_1\frac{\sqrt{c-r^2}}{r}.
\label{eq:thm1ahrd7a8}
\end{eqnarray}
Moreover, for $\bar{\phi}_0$ we have
\begin{eqnarray}
 \bar{\phi}_0
& = &
 \max_{c_3> 0}  \max_{r_y>0}\min_{\gamma>0}
\Bigg .\Bigg(
 \frac{c_3}{2} r^2r_y^2 + \gamma
 -\frac{\alpha}{c_3} \log \lp f_{q}^{(lift)}\rp
 - \hat{\gamma}_{sph}  +\frac{1}{2c_3} \log \lp  1  -  \frac{c_3rr_y  }{2\hat{\gamma}_{sph}}     \rp
\Bigg.\Bigg),
\label{eq:thm1aplhrd16a0}
 \end{eqnarray}
where
\begin{eqnarray}
f_{q}^{(lift)}     =
\int_{-\infty}^{\infty}
\lp \frac{e^{\bar{D} +    \frac{\bar{A}.^2}{2\bar{C}}   }   }{2\sqrt{\bar{C}}} \erfc\lp \frac{ \bar{A}+\bar{C}\bar{B} } {\sqrt{2\bar{C}} }  \rp
 +
  \frac{e^{\bar{D}_2  +    \frac{\bar{A}_2.^2}{2\bar{C}_2}   }   }{2\sqrt{\bar{C}_2}} \erfc \lp -\frac{ \bar{A}_2+\bar{C}_2\bar{B}_2 } {\sqrt{2\bar{C}_2} }  \rp
  \rp \frac{e^{-\frac{ \lp g_1 \rp^2     } {2}  }}{\sqrt{2\pi}} dg_1,
  \label{eq:thm1aplhrd11}
 \end{eqnarray}
and
\begin{eqnarray} \label{eq:thm1aplhrd13}
\hat{\gamma}_{sph} & = &\frac{c_3rr_y+\sqrt{c_3^2r^2r_y^2+4}}{4}, \quad  \gamma_x  =  \frac{\bar{r}_y}{1+\bar{r}_y}\nonumber \\
     \bar{A} & = & -2c_3\gamma_x r |g_1| + 2c_3\gamma_x g_1 r \sqrt{c-r^2},\quad
    \bar{B}  =  -\frac{g_1\sqrt{c-r^2}}{r}\nonumber \\
    \bar{C}  & = & 1 + 2c_3\gamma_x r^2,\quad
    \bar{D}  =  -c_3\gamma_x  ( g_1^2(1+c-r^2) -2g_1 |g_1| \sqrt{c-r^2}    )     \nonumber \\
    \bar{A}_2 & = & 2c_3\gamma_x r |g_1| + 2c_3\gamma_x g_1 r \sqrt{c-r^2}, \quad
    \bar{B}_2  =  -\frac{g_1\sqrt{c-r^2}}{r}   \nonumber \\
    \bar{C}_2 & = & 1 + 2c_3\gamma_x r^2,\quad
    \bar{D}_2  =  -c_3\gamma_x  ( g_1^2(1+c-r^2) + 2g_1 |g_1| \sqrt{c-r^2}    ).
\end{eqnarray}
   \label{thm:thm1a}
\end{theorem}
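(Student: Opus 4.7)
The plan is to mirror the four-step RDT program already laid out in Section \ref{sec:ubrdt}, but now applied to the f-pro objective $\||A\bar{\x}|-|\z|\|_2^2$ rather than the spectral quadratic form. First I would derive an algebraic representation analogous to Lemma \ref{lemma:lemma1}: using rotational invariance of $A$, rotate $\bar{\x}$ into $\|\bar{\x}\|_2 e_1$, so that $A\bar{\x}=\g^{(0)}\|\bar{\x}\|_2$ with $\g^{(0)}=A_{:,1}$ and $|A\bar{\x}|=|\g^{(0)}|$. Writing $\z=\g^{(0)}x+A_{:,2:n}\x_{2:n}$, the constraint $\|\x\|_2^2=c$ together with $\x^T\bar{\x}=x$ reduces to $\|\x_{2:n}\|_2=\sqrt{c-x^2}=r$. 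Introducing a Lagrangian for $A\x=\z$ gives a random primal of the same shape as (\ref{eq:rdteq0a3}), but with the quadratic kernel $-\z^T\diag({\mathcal T}(\g^{(0)}))\z$ replaced by the f-pro objective $\sum_i(|\g_i^{(0)}|-|\z_i|)^2$ (after normalizing $\|\bar{\x}\|_2=1$).

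Next I would obtain a random dual analogue of Theorem \ref{thm:thm1}. Conditioning on $\g^{(0)}$ and applying Gordon's comparison theorem to the bilinear form $\y^T A_{:,2:n}\x_{2:n}$ replaces it by $\y^T\g^{(1)}\|\x_{2:n}\|_2+\x_{2:n}^T\h^{(1)}\|\y\|_2$ with independent standard normal $\g^{(1)},\h^{(1)}$. Collapsing the $\x_{2:n}$ and $\y$-direction optimizations (both are spherical maximizations/minimizations over fixed norms $r$ and $r_y$) and squaring, the $\z$-optimization decouples componentwise into
\[
\min_{z_i}\bigl(|\g_i^{(0)}|-|z_i|\bigr)^2 + r_y\bigl(\g_i^{(0)}x+\g_i^{(1)}r-z_i\bigr)^2,
\]
which has a closed-form solution. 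The $\|\h^{(1)}\|_2$ term concentrates to $\sqrt{n-1}\cdot r\,r_y$ and yields the leading $-r r_y$ correction. Maximizing over $r_y>0$ and taking $\mE_{\g^{(0)},\g^{(1)}}$ produces $\phi_0$. The integral $f_q$ in (\ref{eq:thm1ahrd7a9}) then arises from the standard trick of conditioning on $g_1\triangleq \g_i^{(0)}$, integrating $\g_i^{(1)}$ against a Gaussian, and splitting the domain according to the sign of $\z_i$ (i.e., by whether $|\z_i|=\z_i$ or $|\z_i|=-\z_i$). The two branches are precisely what generate the $\erf(C/\sqrt{2})$ and $e^{-C^2/2}/\sqrt{2\pi}$ contributions, with $A,B,C$ the coefficients arising from the Gaussian quadratic/linear form in $g_1$.

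For the lifted bound $\bar{\phi}_0$, I would apply the lifting procedure Stojnic developed in the RDT program (see \cite{StojnicRegRndDlt10}) to the same random dual: replace the concentration of $\|\h^{(1)}\|_2^2$ by a controlled Laplace-transform representation parametrized by $c_3>0$, introduce a spherical multiplier $\gamma$ for the constraint $\|\x_{2:n}\|_2^2=r^2$ and a $\gamma_x$ multiplier that couples the squared-residual term against the quadratic, and use the standard Gaussian integral identity $\mE e^{c_3(\ldots)}$ to consolidate everything into a single outer $\max_{c_3,r_y}\min_{\gamma}$ program. The parameter $\hat{\gamma}_{sph}$ comes from the stationarity condition for $\gamma$ in the spherical integral (its form $\tfrac{c_3 r r_y+\sqrt{c_3^2 r^2 r_y^2+4}}{4}$ is the positive root of a quadratic arising from $\|\x_{2:n}\|_2=r$). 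The doubled family of coefficients $(\bar{A},\bar{B},\bar{C},\bar{D})$ and $(\bar{A}_2,\bar{B}_2,\bar{C}_2,\bar{D}_2)$ corresponds once again to the two sign branches of $|\z_i|$, and the $\erfc$ structure in $f_q^{(lift)}$ is the closed form of the Gaussian integral over $g_1$ after the Laplace-transform lifting is carried out.

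The main obstacle will be controlling the two sign branches induced by $|A\bar{\x}|$ and $|\z|$ simultaneously with the lifting parameters; in particular, verifying that the convex--concave saddle point structure needed for the $\max_{c_3,r_y}\min_{\gamma}$ exchange is preserved after the non-smooth $|\cdot|$ is resolved into the two-branch $\erfc$ form, and that the chain of inequalities $\xi(c,x)/n\ge \bar{\phi}_0\ge\phi_0$ goes through with the correct direction. Once the lifted program is set up, the residual task is purely computational: enforce stationarity in $(\gamma,r_y,c_3)$, collect the Gaussian integrals, and verify that specializing $c_3\downarrow 0$ recovers the plain estimate $\phi_0$ of (\ref{eq:thm1ahrd7a7}).
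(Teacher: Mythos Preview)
Your proposal is methodologically sound and follows the correct RDT template, but note that the paper's own ``proof'' here is a one-line citation: the result is imported wholesale from Theorems 1 and 2 of the companion paper \cite{Stojnicphretreal24}. What you have written is, in effect, a sketch of what that companion paper does rather than an alternative to the present paper's argument. The algebraic reduction via rotational invariance, the Gordon-type random dual conditioned on $\g^{(0)}$, the componentwise decoupling of the $\z_i$-minimization, and the two sign branches generating the $\erf$/$\erfc$ structure are all exactly the mechanism behind the formulas in (\ref{eq:thm1ahrd7a7})--(\ref{eq:thm1aplhrd13}).

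Two places where your description is slightly off: first, the lifted bound $\bar{\phi}_0$ does not come from a ``Laplace-transform representation of $\|\h^{(1)}\|_2^2$'' or from a convex--concave saddle-point exchange; it comes from the partially lifted comparison inequalities of \cite{Stojnicgscomp16,Stojnicgscompyx16}, which directly furnish the $\max_{c_3}$ structure and the $-\frac{\alpha}{c_3}\log f_q^{(lift)}$ term without any need to justify a min--max swap a posteriori. Second, $\gamma_x$ is not an independent multiplier you introduce; it is a derived quantity (in the squared-magnitude analogue, Theorem \ref{thm:thm2a}, one has $\bar{r}_y=r_y^2/(4\gamma)$ and $\gamma_x=\bar{r}_y/(1+\bar{r}_y)$), so the optimization is genuinely over $(c_3,r_y,\gamma)$ only. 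These are refinements rather than gaps; your overall route is the intended one.
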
\vspace{-.17in}
\begin{proof}
Follows as an immediate consequence of Theorems 1 and 2 (and the ensuing discussions right after these theorems)  in \cite{Stojnicphretreal24}.
\end{proof}

As discussed in great detail in \cite{Stojnicphretreal24}, solving basic phase retrieval, ${\mathcal R}(A)$, using descending algorithms critically depends on analytical properties of $\xi(c,x)$. In particular, \emph{any} descending algorithm will converge to the global optimum and therefore solve phase retrieval if the so-called \emph{parametric manifold} -- ${\mathcal {PM}}(\alpha)$ -- associated with $\xi(c,x)$ has a \emph{single funneling point} (more on ${\mathcal {PM}}(\alpha)$ and funneling points definitions  as well as on particular ${\mathcal {PM}}(\alpha)$ shapes related to the problems of interest here can be found in Figures 2-4, 6, 8, and 9 and associated discussions in \cite{Stojnicphretreal24}; for earlier discussions regarding the importance of studying parametric structure in general see also a long series of RDT work \cite{StojnicCSetam09,StojnicICASSP10var,StojnicISIT2010binary,StojnicICASSP10block,StojnicRegRndDlt10}). Moreover,  \cite{Stojnicphretreal24} went further and uncovered that if one restricts the unknown vector to the unit $\ell_2$ ball  (through say a barrier method), then the portion of ${\mathcal {PM}}(\alpha)$ that critically impacts its shape is associated wit $c=1$ scenario. In Figure \ref{fig:fig2} we show the plain and lifted RDT estimates from Theorem \ref{thm:thm1a} for $c=1$ and $\alpha=1.4$ (the value $\alpha=1.4$ is chosen as a critical value for which the resulting curve is decreasing and with a single funneling point, i.e., it is the theoretical \emph{lifted} RDT estimate of the phase retrieval descending algorithms \emph{phase transition}). As can be seen, the lifted curve indeed decreases which ultimately leads to having $(c,x)=(1,1)$ as a desired single funneling point. At the same time, the curve remains fairly flat for a large portion of $[0,1]$ interval (the allowed set of overlap $x$ values). If one uses an overlap optimal spectral initializer, we from Figure \ref{fig:fig1} find that for $\alpha=1.4$ the starting overlap is no better than  $x^{(spec)}\approx0.6439$. Superimposing this value  (given in purple) on the lifted RDT curve in Figure \ref{fig:fig2}, one observes that if the best spectral initializer is used, the algorithm will still start in the so-called flat region. In theory that would not matter as the curve is of the decreasing type. However, as pointed out in \cite{Stojnicphretreal24}, in practice things are a bit different. Namely, theoretical values rely on strong concentrations which indeed happen in the assumed high-dimensional $n\rightarrow\infty$ regime. On the other hand, in practical algorithmic running the values for $n$ are finite and the concentrations will not always be perfect. This basically means that instead of being smooth the shown curve will in practice be slightly \emph{jittery}. Given that the curve is already fairly flat to begin with, such a slight jitteriness can easily cause local traps and ultimately prevent the algorithm from reaching the global optimum, i.e., from reaching the low right corner ($\bar{\phi}_0=1$ and $x=1$). One can summarize the above as: in theory ${\mathcal {PM}}(\alpha)$ is so to say both \emph{globally} and \emph{locally} smooth whereas in practice its a \emph{local jitteriness} is likely to emerge as well.

Given the above, one naturally may ask what would be the way to potentially mitigate the jitteriness effect and remedy the problem. Even without understanding the behavior of $\xi(c,x)$ (or even without noting that $\xi(c,x)$ plays any role in reaching the global optimum of ${\mathcal R}(A)$), one may follow the logic of ``safer compression'' and simply increase the number of measurements hoping that would create more favorable algorithmic decompression (recovery) conditions. In Figure \ref{fig:fig3} we show what actually happens if one indeed opts for safer compression and increases the number of measurements, i.e., increases the sample complexity ratio to, say, $\alpha=1.6$. We first observe from Figure \ref{fig:fig1} that the overlap of the optimal spectral initializer increases to $x^{(spec)}=0.7055$. This value is superimposed  in orange on the lifted RDT curve in Figure \ref{fig:fig3}. As figure shows, the lifted RDT curve is now visibly steeper and the superimposed initializer's overlap value is clearly outside the so-called flat region. This would suggest that the descending algorithms are indeed likely to be more resistant to jitteriness effect as one increases the sample complexity ratio.

 \begin{figure}[h]
%\begin{minipage}[b]{.5\linewidth}
\centering
\centerline{\includegraphics[width=1\linewidth]{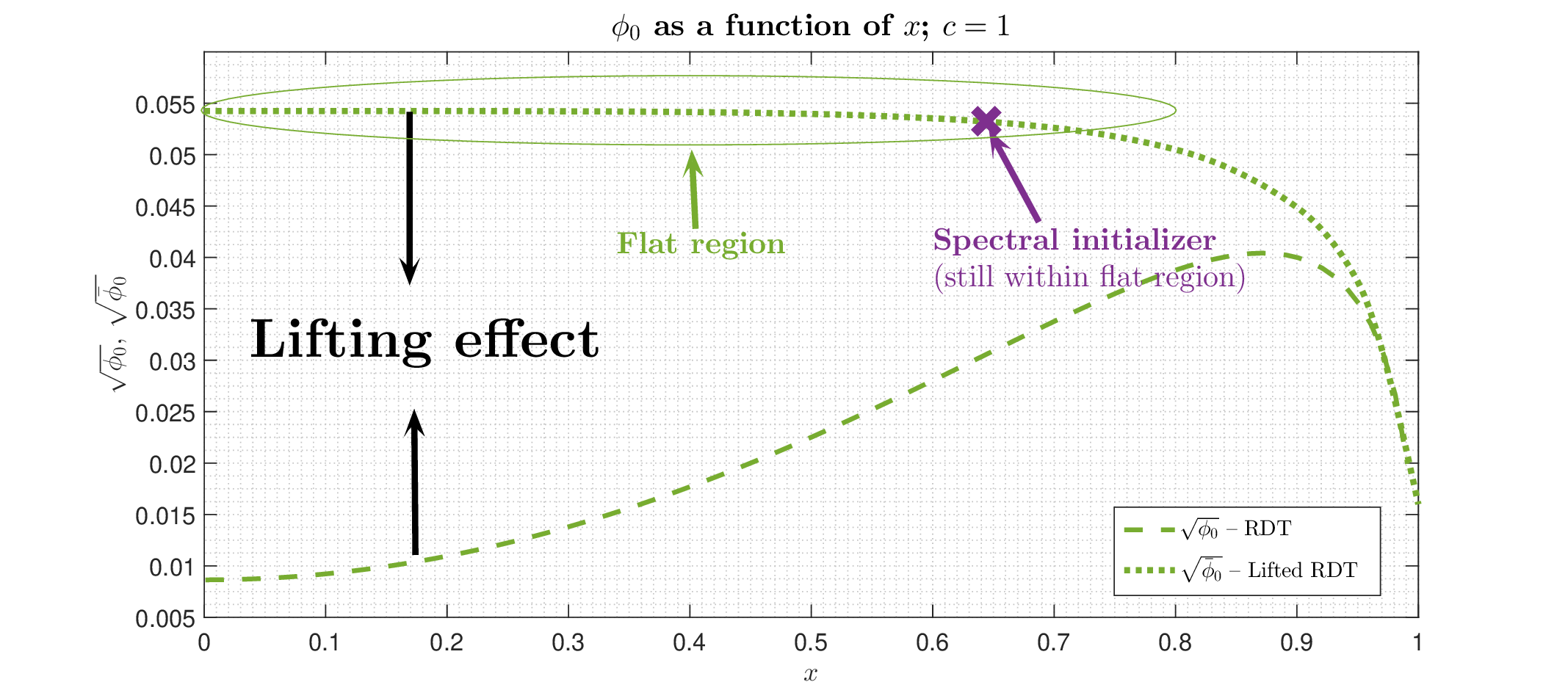}}
%\end{minipage}
%\begin{minipage}[b]{.5\linewidth}
%\centering
%\centerline{\epsfig{figure=finprerral08.eps,width=9cm,height=6.5cm}}
%\end{minipage}
\caption{$\phi_0$ and $\bar{\phi}_0$ as functions of $x$; $c=1$ and $\alpha=1.4$}
\label{fig:fig2}
\end{figure}

\begin{figure}[h]
%\begin{minipage}[b]{.5\linewidth}
\centering
\centerline{\includegraphics[width=1\linewidth]{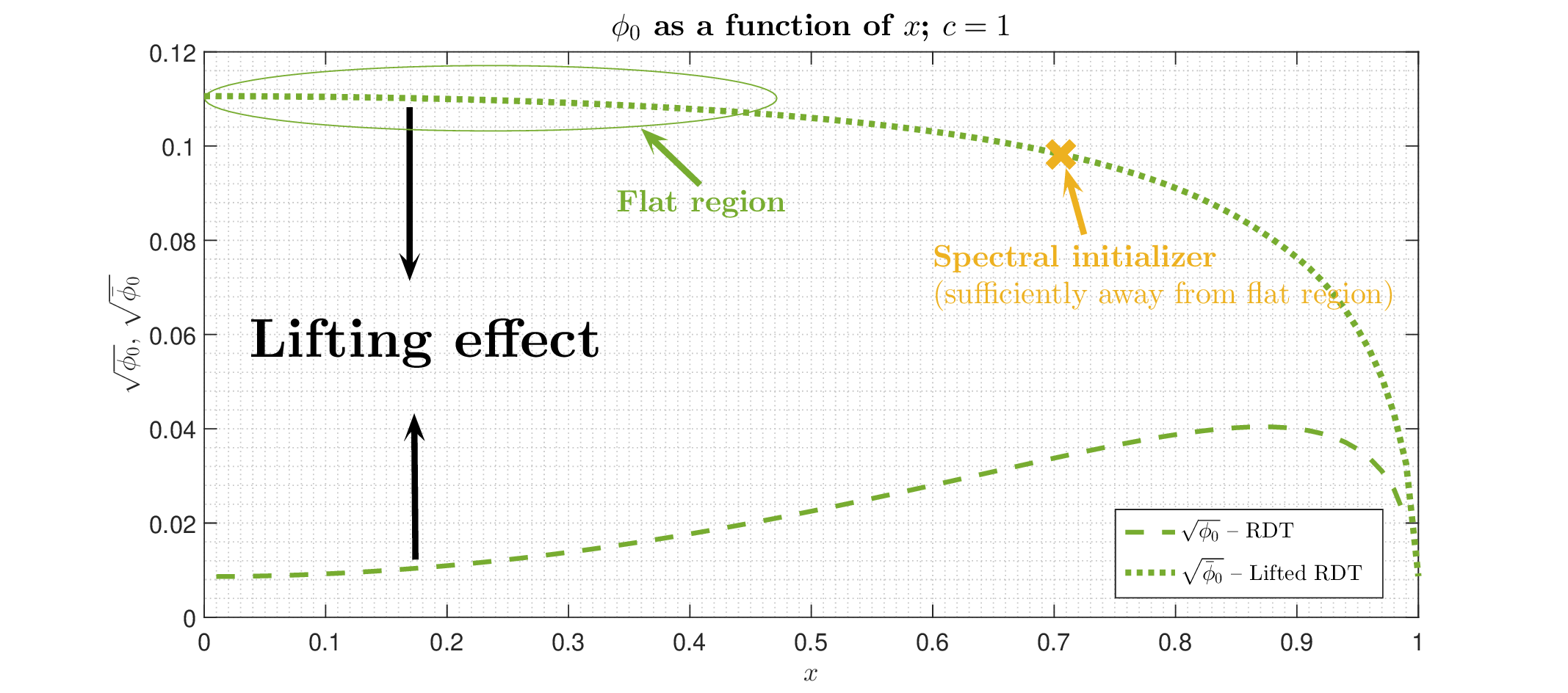}}
%\end{minipage}
%\begin{minipage}[b]{.5\linewidth}
%\centering
%\centerline{\epsfig{figure=finprerral08.eps,width=9cm,height=6.5cm}}
%\end{minipage}
\caption{$\phi_0$ and $\bar{\phi}_0$ as functions of $x$; $c=1$ and $\alpha=1.6$}
\label{fig:fig3}
\end{figure}

%\begin{figure}[h]
%%\begin{minipage}[b]{.5\linewidth}
%\centering
%\centerline{\includegraphics[width=1\linewidth]{phretsqrtalfa15Mfold0rsbPAP}}
%%\end{minipage}
%%\begin{minipage}[b]{.5\linewidth}
%%\centering
%%\centerline{\epsfig{figure=finprerral08.eps,width=9cm,height=6.5cm}}
%%\end{minipage}
%\caption{Parametric manifold for $\alpha=1.5$; Red/purple curves -- undesired/desired funneling flows}
%\label{fig:fig2}
%\end{figure}

While the flatness of the manifold and consequential jitteriness are the key factors impacting the practical reaching of the theoretical phase transitions, there are two other things that may have a prominent effect as well. First, the above assumes that the $\ell_2$ norm of the unknown $\x$ is limited while the algorithm is being run. That typically requires utilizing algorithmic techniques (like the above mentioned barrier method) which ensure that such a running is indeed in place. Such techniques, on the other hand, are more complex than say the standard plain gradient. If one is to use just a plain gradient then the above analysis needs to extend to $c>1$ scenarios. As \cite{Stojnicphretreal24}'s Section 4 demonstrates, the presented theory then implies that in such scenarios there is no guaranteed sample complexity ratio beyond which the descending algorithms generically converge to the global optimum. In practice though, things are a bit different. The datasets are typically such that the unfavorable scenarios  are unlikely to happen and simple plain gradient type of algorithmic schemes actually still work reasonably well (see Section 4.1 in \cite{Stojnicphretreal24} as well as the first next section below).

The second thing that one also needs to keep in mind is that the strong RDT is not in place and the estimated oversampling is likely to go down if one implements further lifting and ultimately \emph{fully lifted} RDT \cite{Stojnicflrdt23} (i.e., the results are likely to be even more favorable than what we presented in Figures \ref{fig:fig2} and \ref{fig:fig3}). None of this brings any conceptual changes to the above discussion, but may slightly change the concrete values of the minimal needed oversampling ratio. All in all, the following main conclusion can be taken as a rule of thumb:  in practical algorithmic running, one should slightly oversample (by say 10-20\%) the theoretically minimal sample complexity ratio. In other words, when facing concrete algorithmic implementations, a useful strategy might be to work in the regimes $10-20\%$ above the theoretical phase transition. We show in the next section numerical simulations that bring the empirical validity to such a strategy.

Before getting to the numerical results, we briefly recall on a discussion from \cite{Stojnicphretreal24} regarding potential other sources  that may be responsible for PR's generic algorithmic solvability. All of the above discussion is centered around the so called optimal objective landscape. As stated in \cite{Stojnicphretreal24}, it is likely that associated  parametric manifold can indeed be among the key reasons behind generic success of practically efficient algorithms. One however, has to keep in mind that many other random structures  intrinsic  features may contribute as well. For example,the entirety of the \cite{Stojnicphretreal24}'s discussion regarding \emph{overlap gap property} (OGP) \cite{Gamar21,GamarSud14,GamarSud17,GamarSud17a,AchlioptasCR11,HMMZ08,MMZ05,Montanari19} and \emph{local entropy} (LE) \cite{Bald15,Bald16,Bald20} applies here as well. Whether these or others related properties play any role with respect to PR's generic solvability remains to be seen.

%%%%%%%%%%%%%%%%%%%%%%%%%%%%%%%%%%%%%%%%%%%%%%%%%%%%%%%%%%%%%%%%%%%%%%%%%%%%%%%%%%%%%%%%%%%%%%%%%%%%%%%%%%%%%%%%%%%%%%%
%%%%%%%%%%%%%%%%%%%%%%%%%%%%%%%%%%%%%%%%%%%%%%%%%%%%%%%%%%%%%%%%%%%%%%%%%%%%%%%%%%%%%%%%%%%%%%%%%%%%%%%%%%%%%%%%%%%%%%%
\section{Numerical simulations}
 \label{sec:pract}
%%%%%%%%%%%%%%%%%%%%%%%%%%%%%%%%%%%%%%%%%%%%%%%%%%%%%%%%%%%%%%%%%%%%%%%%%%%%%%%%%%%%%%%%%%%%%%%%%%%%%%%%%%%%%%%%%%%%%%%
%%%%%%%%%%%%%%%%%%%%%%%%%%%%%%%%%%%%%%%%%%%%%%%%%%%%%%%%%%%%%%%%%%%%%%%%%%%%%%%%%%%%%%%%%%%%%%%%%%%%%%%%%%%%%%%%%%%%%%%

As mentioned above, in practical implementations one has to account for ${\mathcal {PM}}(\alpha)$'s local jitteriness and potential limitations/restrictions of the norm of the targeted vector, $\|\x\|_2$. The former will be addressed through oversampling. To emphasize the impact of the latter, we will run both, plain gradient and its a log barrier variant which allows to restrict $\|\x\|_2$.  We refer to \cite{Stojnicphretreal24} for complete set of technical details and here briefly summarize the main points related to algorithmic implementations.

The following barrier function is the objective of interest
\begin{eqnarray}
f_{bar}(t_0;\x) \triangleq  t_0\||A\bar{\x}|^2-|A\x|^2 \|_2^2 + \log\lp 1-\|\x\|_2^2 \rp
=t_0f_{plain}(\x) + \log\lp 1-\|\x\|_2^2 \rp,
\label{eq:practeq1}
\end{eqnarray}
where
\begin{eqnarray}
f_{plain}(\x) \triangleq \||A\bar{\x}|^2-|A\x|^2 \|_2^2.
\label{eq:practeq1a0}
\end{eqnarray}
Similarly to what was done in  \cite{Stojnicphretreal24}, we find it slightly more convenient to work with (derivative) smoother squared magnitudes  rather than just magnitudes. Also, the norm is taken to be smaller than 1 which in a way assumes a prior norm knowledge (as highlighted in \cite{Stojnicphretreal24}, such an assumption can be avoided by simply  rerunning $\sim 1/\epsilon$ times our procedures with different norm barriers; such a rerunning  would not change the algorithm's complexity order).

We test two optimization procedures, the plain gradient, $\mathbf{gradplain}$, and the following
 \begin{eqnarray}
\bl{\mathbf{hybrid:}} \qquad    \a^{(i+1)} &  =  & \mathbf{reshuffle} \lp \mathbf{gradplain} \lp \mathbf{reshuffle} \lp\mathbf{gradbar}\lp  \a^{(i)} \rp \rp \rp \rp, i=0,1,2,\dots,
\label{eq:practeq6}
\end{eqnarray}
alternation between plain gradient and its log barrier, $\mathbf{gradbar}$ (to help avoiding local traps, we in between also apply
a simple $\mathbf{reshuffle}$ procedure  which  takes a fraction (say $5-10\%$) of components a few times (rarely more than 10 times) and changes their signs). We rely on the overlap optimal spectral initializers (OptSpins) for $\a^{(0)}$, i.e. we take
\begin{equation}
\bl{\mbox{\emph{\textbf{overlap optimal spectral initialization:}}}} \qquad   \a^{(0)}=\x^{(spec)} \triangleq \mbox{max eigenvector} \lp A^T \mbox{diag} \hat{{\mathcal T}} \lp  | A\bar{\x} |^2  \rp   A\rp.
\label{eq:practeq6}
\end{equation}
The results obtained through both procedure $\mathbf{gradplain}$ and $\mathbf{hybrid}$ are shown in Figure \ref{fig:fig7}. To get a feeling as to what the effect of OptSpins on the simulated PR phase transition is, we also show in parallel the results obtained in \cite{Stojnicphretreal24} where a simple (overlap non-optimal) spectral initializer is used. In all tests $n=300$.
\begin{figure}[h]
%\begin{minipage}[b]{.5\linewidth}
\centering
\centerline{\includegraphics[width=1\linewidth]{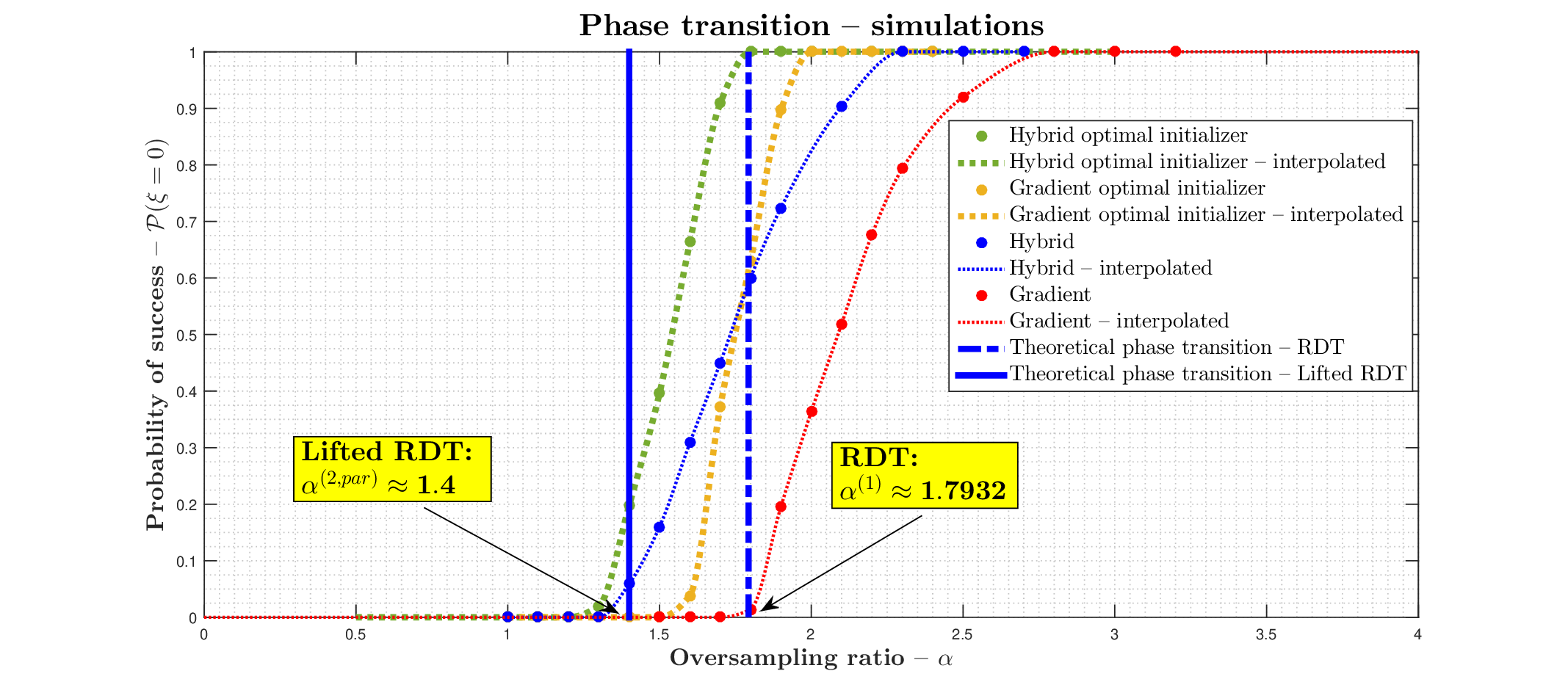}}
%\end{minipage}
%\begin{minipage}[b]{.5\linewidth}
%\centering
%\centerline{\epsfig{figure=finprerral08.eps,width=9cm,height=6.5cm}}
%\end{minipage}
\caption{Simulated and theoretical RDT and lifted RDT phase transitions -- Hybrid vs Gradient}
\label{fig:fig7}
\end{figure}
As can be seen, a substantial improvement is observed for both procedures if the OptSpins are used. Moreover, both simulated phase transitions are a bit away from $\alpha=1.4$ (theoretical lifted RDT prediction), but are fairly close to its ``safer compression'' alternative, $\alpha=1.6$. This is true even though we have done simulations with squared magnitudes (opposed to the non-squared ones used in theoretical calculations) and with $f_{bar}$ objective (opposed to $f_{plain}$  (with non-squared magnitudes) in ${\mathcal R}(A)$ used in theoretical calculations).  As Figures 8, 9, and 14 (and their associated lengthy discussions) in \cite{Stojnicphretreal24} indicate, such deviations were indeed not expected to bring any substantial changes in algorithmic performance.

%%%%%%%%%%%%%%%%%%%%%%%%%%%%%%%%%%%%%%%%%%%%%%%%%%%%%%%%%%%%%%%%%%%%%%%%%%%%%%%%%%%%%%%%%%%%%%%%%%%%%%%%%%%%%%%%%%%%%%%
%%%%%%%%%%%%%%%%%%%%%%%%%%%%%%%%%%%%%%%%%%%%%%%%%%%%%%%%%%%%%%%%%%%%%%%%%%%%%%%%%%%%%%%%%%%%%%%%%%%%%%%%%%%%%%%%%%%%%%%
\subsection{Squared magnitudes}
 \label{sec:sqadj}
%%%%%%%%%%%%%%%%%%%%%%%%%%%%%%%%%%%%%%%%%%%%%%%%%%%%%%%%%%%%%%%%%%%%%%%%%%%%%%%%%%%%%%%%%%%%%%%%%%%%%%%%%%%%%%%%%%%%%%%
%%%%%%%%%%%%%%%%%%%%%%%%%%%%%%%%%%%%%%%%%%%%%%%%%%%%%%%%%%%%%%%%%%%%%%%%%%%%%%%%%%%%%%%%%%%%%%%%%%%%%%%%%%%%%%%%%%%%%%%

The $f_{plain}(\x)$  used in the above simulation is slightly different from the one used in  (\ref{eq:ex1a4}) and (\ref{eq:ex1a4a0}) and the ensuing analyses leading to Theorem \ref{thm:thm1a}. For the completeness, we below briefly revisit discussion from  \cite{Stojnicphretreal24} related to what happens if, instead of  (\ref{eq:ex1a4}) and (\ref{eq:ex1a4a0}), one conducts theoretical analyses utilizing
\begin{eqnarray}
 {\mathcal R}^{(sq)}A): \qquad \qquad    \min_{\x,\z} & & \||A\bar{\x}|^2-|\z|^2 \|_2^2\nonumber \\
  \mbox{subject to} & &  A\x=\z, \label{eq:sqadjex1a4}
\end{eqnarray}
and corresponding
 \begin{eqnarray}
\hspace{-.8in}\bl{\textbf{\emph{f-pro -- squared magnitudes:}}} \qquad\qquad  \xi^{(sq)}(c,x) \triangleq \min_{\x,\z} & & \||A\bar{\x}|^2-|\z|^2 \|_2^2\nonumber \\
  \mbox{subject to} & &  A\x=\z \nonumber \\
  & & \x^T\bar{\x}=x \nonumber \\
  & & \|\x\|_2^2=c. \label{eq:sqadjex1a4a0}
\end{eqnarray}

\begin{theorem} Assume the setup of Theorem \ref{thm:thm1a} and let $\xi^{(sq)}(c,x)$ be as in (\ref{eq:sqadjex1a4a0}). Then
\begin{eqnarray}
  \lim_{n\rightarrow\infty}\mP_{ A } \lp \frac{\xi^{(sq)}(c,x)}{n}  \geq  \bar{\phi}^{(sq)}_0  \geq   \phi^{(sq)}_0 \rp \longrightarrow 1,\label{eq:ta17a0}
\end{eqnarray}
where $\phi^{(sq)}_0$ and $\bar{\phi}^{(sq)}_0$ are squared magnitudes objective plain and lifted RDT estimates. In particular, for $\phi^{(sq)}_0$ we have
\begin{eqnarray}
 \phi^{(sq)}_0
  & = &
  \max_{r_y>0}
  \lp
  \alpha
  f^{(sq)}_q
  -  r^2 r_y \rp,
\label{eq:thm2asqadjhrd7a4}
 \end{eqnarray}
where $\g_i^{(0)}$ and $\g_i^{(1)}$ are iid standard normals, $\cG=\{\g_i^{(0)},\g_i^{(1)}\}$, and
\begin{eqnarray}
f_q^{(sq)} = \mE_{\cG} \min_{|\z_i|\in{\mathcal Z}^{(sq)}}\lp \||\g_i^{(0)}|^2-|\z_i|^2 \|_2^2
  +\||\g_i^{(0)}x   +  \g_i^{(1)}r| -|\z_i|\|_2^2 r_y \rp,
\label{eq:thm2ahrd7a2a0}
 \end{eqnarray}
with
\begin{eqnarray}
p_c & = & \frac{1}{2}\lp r_y -2|\g^{(0)}| \rp ,\quad q_c  =   -\frac{r_y}{2} |\g_i^{(0)}x   +  \g_i^{(1)}r |  \nonumber \\
      a_{c,1} &  =  & \lp -\frac{q_c}{2} + \sqrt{ \frac{q_c^2}{4} + \frac{p_c^3}{27} }  \rp^{\frac{1}{3}}
 - \frac{p_c}{3     \lp -\frac{q_c}{2} + \sqrt{ \frac{q_c^2}{4} + \frac{p_c^3}{27} }  \rp^{\frac{1}{3}}  }
\nonumber \\
     a_{c,2} &  =  &  a_{c,1}  \frac{-1-\sqrt{-3}}{2}, \quad
      a_{c,3}   =   a_{c,1}  \frac{-1+\sqrt{-3}}{2}\nonumber \\
 {\mathcal Z}^{(sq)} & = &
\begin{cases}
  \{a_{c,1},0 \}, & \mbox{if $\frac{q_c^2}{4} + \frac{p_c^3}{27} \geq 0$}  \\
  \{a_{c,1},a_{c,2},a_{c,3},0 \} , & \mbox{otherwise}.
\end{cases}.
\label{eq:thm2asqadjhrd7a1}
 \end{eqnarray}
 Moreover, for $\bar{\phi}^{(sq)}_0$ we have
\begin{eqnarray}
 \bar{\phi}^{(sq)}_0
&  = &
 \max_{c_3> 0}  \max_{r_y>0}\min_{\gamma>0}
\Bigg .\Bigg(
 \frac{c_3}{2} r^2r_y^2 + \gamma
 -\frac{\alpha}{c_3} \log \lp f_{q}^{(sq,lift)}\rp
 - \hat{\gamma}_{sph}  +\frac{1}{2c_3} \log \lp  1  -  \frac{c_3rr_y  }{2\hat{\gamma}_{sph}}     \rp
\Bigg.\Bigg),
\label{eq:thm2asqadjplhrd16a0}
 \end{eqnarray}
 where
\begin{eqnarray}
f_{q}^{(sq,lift)}
 & = &
\int_{-\infty}^{\infty}
\int_{-\infty}^{\infty}
 \frac{e^{
 -c_3 \lp \min_{|\z_i|\in{\bar{\mathcal Z}}_i^{(sq)}}\lp \||\g_i^{(0)}|^2-|\z_i|^2 \|_2^2
  +\||\g_i^{(0)}x   +  \g_i^{(1)}r| -|\z_i|\|_2^2 \bar{r}_y \rp \rp
 -\frac{ \lp \g_i^{(0)} \rp^2 + \lp \g_i^{(1)} \rp^2     } {2}  }}{\sqrt{2\pi}^2} d\g_i^{(0)} d\g_i^{(1)}, \nonumber \\
  \label{eq:thm2asqadjplhrd11}
 \end{eqnarray}
with
\begin{eqnarray}
\bar{r}_y & =  & \frac{r_y^2}{4\gamma},\quad \bar{p}_c  =  \frac{1}{2}\lp \bar{r}_y -2|\g^{(0)}| \rp, \quad
\bar{q}_c  =   -\frac{\bar{r}_y}{2} |\g_i^{(0)}x   +  \g_i^{(1)}r |.
\nonumber \\
     \bar{a}_{c,1} &  =  & \lp -\frac{\bar{q}_c}{2} + \sqrt{ \frac{\bar{q}_c^2}{4} + \frac{\bar{p}_c^3}{27} }  \rp^{\frac{1}{3}}
 - \frac{\bar{p}_c}{3     \lp -\frac{\bar{q}_c}{2} + \sqrt{ \frac{\bar{q}_c^2}{4} + \frac{\bar{p}_c^3}{27} }  \rp^{\frac{1}{3}}  } \nonumber \\
     \bar{a}_{c,2}  & = &   \bar{a}_{c,1}  \frac{-1-\sqrt{-3}}{2},\quad
      \bar{a}_{c,3}   =    \bar{a}_{c,1}  \frac{-1+\sqrt{-3}}{2}
      \nonumber \\
\bar{{\mathcal Z}}_i^{(sq)} &= &
\begin{cases}
  \{\bar{a}_{c,1},0 \}, & \mbox{if $\frac{\bar{q}_c^2}{4} + \frac{\bar{p}_c^3}{27} \geq 0$}  \\
  \{\bar{a}_{c,1},\bar{a}_{c,2},\bar{a}_{c,3},0 \} , & \mbox{otherwise}.
\end{cases},
 \label{eqsqadj:thm2aplhrd4}
\end{eqnarray}
\label{thm:thm2a}
\end{theorem}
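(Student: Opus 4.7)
The plan is to mirror the proof of Theorem \ref{thm:thm1a} (carried out in the companion paper \cite{Stojnicphretreal24}) step by step, replacing the non-squared magnitudes $\||A\bar\x|-|\z|\|_2^2$ by $\||A\bar\x|^2-|\z|^2\|_2^2$ throughout. First, after exploiting rotational invariance of $A$ to choose $\bar\x = [1,0,\dots,0]^T$ and decomposing $A\x = A_{:,1}x + A_{:,2:n}\x_{2:n}$, I would introduce a dual variable $\y$ for the linear constraint $A\x = \z$, yielding the random primal
\begin{equation*}
\xi^{(sq)}(c,x) = \min_{\|\x_{2:n}\|_2=r,\z}\max_{\y}\bigl(\||\g^{(0)}|^2 - |\z|^2\|_2^2 + \y^T\g^{(0)}x + \y^TA_{:,2:n}\x_{2:n} - \y^T\z\bigr),
\end{equation*}
with $\g^{(0)}\triangleq A_{:,1}$. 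This is the direct squared-magnitudes analogue of (\ref{eq:rdteq0a3}).

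Next, conditioning on $\g^{(0)}$ and applying Gordon's comparison theorem in exactly the form used in Theorem \ref{thm:thm1} produces the random dual bound, with $A_{:,2:n}\x_{2:n} + \y^T A_{:,2:n}$ replaced by $\g^{(1)}\|\x_{2:n}\|_2 + \h^{(1)}\|\y\|_2$. Performing the trivial $\x$ and $\y$ optimizations (the former now a unit-sphere optimization whose optimal direction aligns with $\h^{(1)}$, the latter a scalar scaling of $r_y$) and using concentration/statistical identicalness over $i$, the problem reduces to a decoupled per-coordinate optimization giving
\begin{equation*}
\phi^{(sq)}_0 \;\geq\; \max_{r_y>0}\Bigl(\alpha\,\mE_{\cG}\min_{|\z_i|\geq 0}\bigl((|\g_i^{(0)}|^2-|\z_i|^2)^2 + r_y(|\g_i^{(0)}x + \g_i^{(1)}r| - |\z_i|)^2\bigr) - r^2 r_y\Bigr).
\end{equation*}
This matches (\ref{eq:thm2asqadjhrd7a4})--(\ref{eq:thm2ahrd7a2a0}) up to identifying $f_q^{(sq)}$ with the inner expectation.

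The main obstacle, and the technical heart of the proof, is the inner one-dimensional minimization over $|\z_i|\geq 0$. Unlike in the non-squared case (where the first-order condition was linear in $\z_i$, see (\ref{eq:hrd7a0})), the squared objective is a degree-four polynomial in $|\z_i|$, so its stationarity condition is a \emph{depressed cubic}
\begin{equation*}
|\z_i|^3 + p_c |\z_i| + q_c = 0,
\end{equation*}
with $p_c,q_c$ as defined in (\ref{eq:thm2asqadjhrd7a1}). Cardano's formula then produces one real root $a_{c,1}$ when the discriminant $q_c^2/4 + p_c^3/27$ is nonnegative, and three distinct real roots $a_{c,1},a_{c,2},a_{c,3}$ (obtained by multiplying the principal cube root by the primitive cube roots of unity) otherwise. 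Because the feasible set is $|\z_i|\geq 0$, the candidate minimizers are exactly these stationary points together with the boundary point $0$; taking the set ${\mathcal Z}^{(sq)}$ of admissible values as in (\ref{eq:thm2asqadjhrd7a1}) and evaluating the objective over this finite set gives the claimed expression for $f_q^{(sq)}$. I expect the bookkeeping on signs and on the selection of valid (nonnegative real) roots to be the finicky part; once it is in place the plain RDT estimate (\ref{eq:thm2asqadjhrd7a4}) is assembled.

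For the lifted bound $\bar\phi^{(sq)}_0$, I would invoke the partial lifting machinery of \cite{Stojnicflrdt23} exactly as it was deployed in the proof of Theorem \ref{thm:thm1a} (cf.\ \cite{Stojnicphretreal24}, Theorem 2). Concretely, introduce the lifting parameter $c_3>0$, the spherical Lagrangian $\gamma$, and reduce the lifted expectation to the log-Laplace functional
\begin{equation*}
\tfrac{1}{c_3}\log \mE_{\cG}\exp\!\bigl(-c_3\cdot(\text{per-coordinate squared-magnitude objective with }\bar r_y = r_y^2/(4\gamma))\bigr),
\end{equation*}
which, after performing the inner $|\z_i|$ minimization by the same cubic argument (now with barred parameters $\bar p_c,\bar q_c$ and root set $\bar{\mathcal Z}^{(sq)}_i$), yields the two-dimensional Gaussian integral $f_q^{(sq,lift)}$ in (\ref{eq:thm2asqadjplhrd11}). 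The spherical Lagrange term $\hat\gamma_{sph}$ and the correction $\tfrac{1}{2c_3}\log(1 - c_3 r r_y/(2\hat\gamma_{sph}))$ arise exactly as in \cite{Stojnicphretreal24}, Theorem 2, from the lifted sphere-integral evaluation that is independent of whether magnitudes are squared or not. Collecting everything gives (\ref{eq:thm2asqadjplhrd16a0}), and the probability statement follows, as in Theorem \ref{thm:thm1a}, from concentration of the random primal around its expectation.
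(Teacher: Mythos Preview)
Your proposal is correct and follows essentially the same route as the paper: the paper's own proof is a one-line citation to Theorems 3 and 4 of the companion work \cite{Stojnicphretreal24}, and what you have outlined is precisely the content of those theorems adapted to the squared-magnitudes objective. In particular, your identification of the depressed cubic $|\z_i|^3+p_c|\z_i|+q_c=0$ (and the resulting Cardano candidate set ${\mathcal Z}^{(sq)}$) as the sole new ingredient relative to the non-squared case is exactly the point, and the lifted part carries over verbatim once $\bar r_y=r_y^2/(4\gamma)$ is introduced.
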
\vspace{-.17in}
\begin{proof}
Follows immediately as a direct consequence of Theorems 3 and 4 and the associated ensuing discussions right after these theorems in \cite{Stojnicphretreal24}.
\end{proof}

Numerical results obtained through the above theorem are shown in Figure \ref{fig:fig4}. As discussed in great detail in \cite{Stojnicphretreal24}, precision of the underlying numerical evaluations is not necessarily on the level of the corresponding ones from the previous sections that relate to the non-squared magnitudes objectives. Since the values of $\sqrt{\bar{\phi}^{(sq)}_0}$ often differ on the fifth decimal for a large portion of the allowed range of $x$, it is even difficult to say that the lifted curve definitively has  decreasing behavior and that $\alpha=1.4$ is indeed the phase transition. On the other hand, as we have seen in earlier section, for practical functioning of descending algorithms that may not be overly relevant. Instead, one finds as of primary concern whether or not the starting point is outside the so-called \emph{flat region}. As Figure \ref{fig:fig4} shows, for $\alpha=1.4$ the spectral initializer is well within the flat region. That is however to be expected if one does not believe that there should be much of a difference between the non-squared and squared magnitudes objectives. Namely, in the corresponding non-square magnitudes scenario shown in Figure  \ref{fig:fig2} one also observes that the optimal spectral initializer falls well within the flat region. Naturally a  similar non-squared/squared lack of difference  trend is expected to continue as the oversampling is increased to say $\alpha=1.6$. Figure \ref{fig:fig4a} shows that this is indeed true (simulations results from Figure \ref{fig:fig7} provide an empirical confirmation as well). In other words, the main takeaway is that there is really not that much conceptual difference between the non-squared and squared magnitudes scenarios -- the former is more convenient for theoretical analysis whereas the latter is a bit more amenable for practical algorithmic implementations.

\begin{figure}[h]
%\begin{minipage}[b]{.5\linewidth}
\centering
\centerline{\includegraphics[width=1\linewidth]{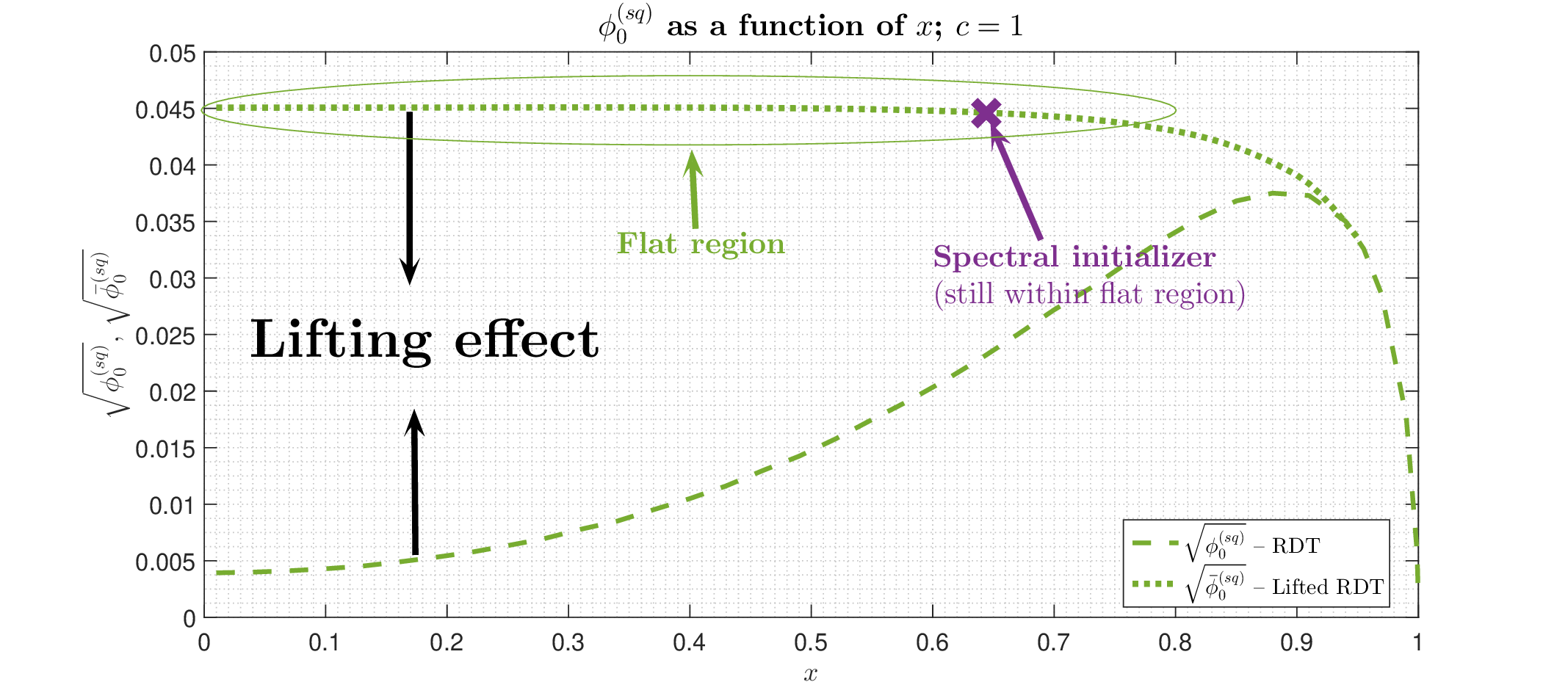}}
%\end{minipage}
%\begin{minipage}[b]{.5\linewidth}
%\centering
%\centerline{\epsfig{figure=finprerral08.eps,width=9cm,height=6.5cm}}
%\end{minipage}
\caption{$\phi^{(sq)}_0$ and $\bar{\phi}^{(sq)}_0$ as functions of $x$; $c=1$ and $\alpha=1.4$}
\label{fig:fig4}
\end{figure}

\begin{figure}[h]
%\begin{minipage}[b]{.5\linewidth}
\centering
\centerline{\includegraphics[width=1\linewidth]{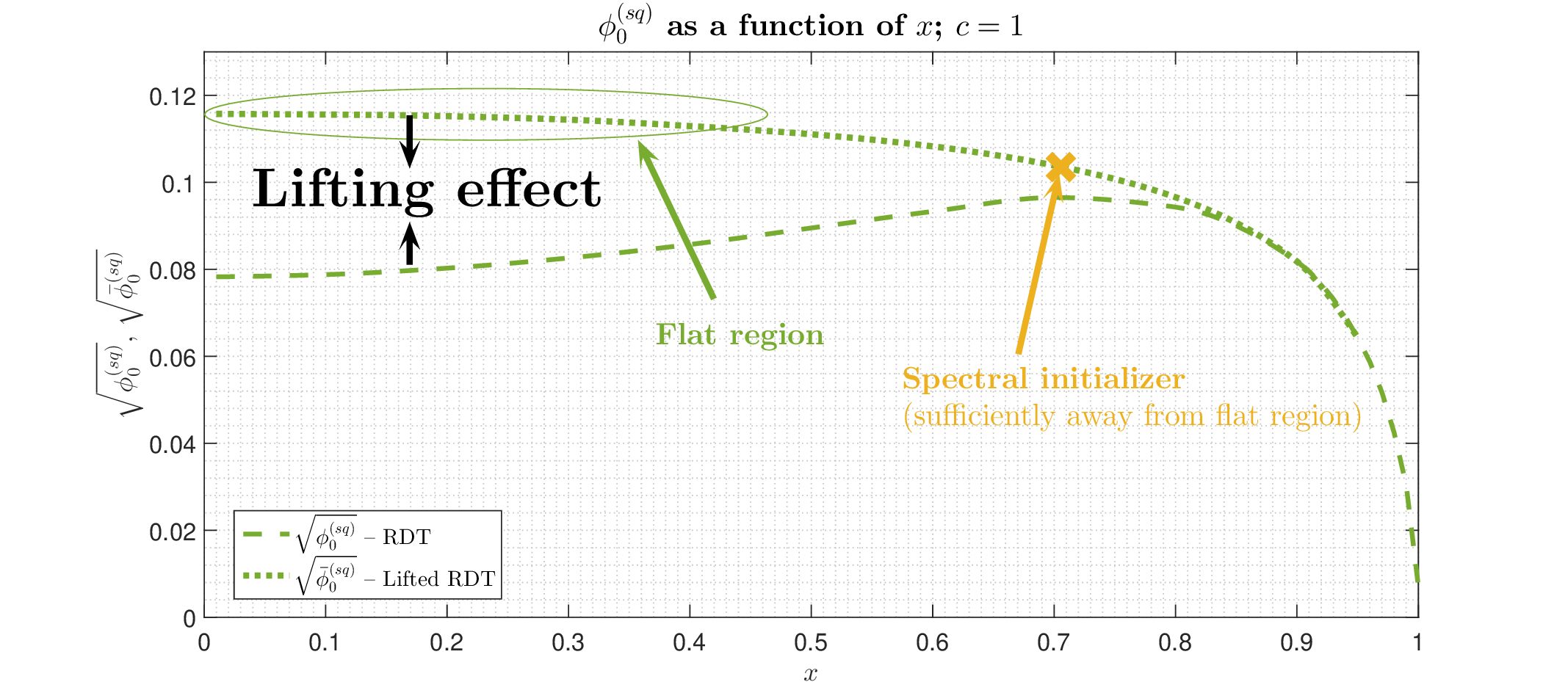}}
%\end{minipage}
%\begin{minipage}[b]{.5\linewidth}
%\centering
%\centerline{\epsfig{figure=finprerral08.eps,width=9cm,height=6.5cm}}
%\end{minipage}
\caption{$\phi^{(sq)}_0$ and $\bar{\phi}^{(sq)}_0$ as functions of $x$; $c=1$ and $\alpha=1.6$}
\label{fig:fig4a}
\end{figure}

%\begin{figure}[h]
%%\begin{minipage}[b]{.5\linewidth}
%\centering
%\centerline{\includegraphics[width=1\linewidth]{phretsqrtSQalfa14PAP}}
%%\end{minipage}
%%\begin{minipage}[b]{.5\linewidth}
%%\centering
%%\centerline{\epsfig{figure=finprerral08.eps,width=9cm,height=6.5cm}}
%%\end{minipage}
%\caption{\emph{Squared magnitudes} -- effect of \emph{lifted} RDT ; $c=1$ and $\alpha=1.4$}
%\label{fig:fig12}
%\end{figure}

%%%%%%%%%%%%%%%%%%%%%%%%%%%%%%%%%%%%%%%%%%%%%%%%%%%%%%%%%%%%%%%%%%%%%%%%%%%%%%%%
%%%%%%%%%%%%%%%%%%%%%%%%%%%%%%%%%%%%%%%%%%%%%%%%%%%%%%%%%%%%%%%%%%%%%%%%%%%%%%%%
%%%%%%%%%%%%%%%%%%%%%%%%%%%%%%%%%%%%%%%%%%%%%%%%%%%%%%%%%%%%%%%%%%%%%%%%%%%%%%%%
%%%%%%%%%%%%%%%%%%%%%%%%%%%%%%%%%%%%%%%%%%%%%%%%%%%%%%%%%%%%%%%%%%%%%%%%%%%%%%%%
%%%%%%%%%%%%%%%%%%%%%%%%%%%%%%%%%%%%%%%%%%%%%%%%%%%%%%%%%%%%%%%%%%%%%%%%%%%%%%%%
\section{Lowering dPR phase transitions via optimal initializers}
\label{sec:thimpact}
%%%%%%%%%%%%%%%%%%%%%%%%%%%%%%%%%%%%%%%%%%%%%%%%%%%%%%%%%%%%%%%%%%%%%%%%%%%%%%%%
%%%%%%%%%%%%%%%%%%%%%%%%%%%%%%%%%%%%%%%%%%%%%%%%%%%%%%%%%%%%%%%%%%%%%%%%%%%%%%%%
%%%%%%%%%%%%%%%%%%%%%%%%%%%%%%%%%%%%%%%%%%%%%%%%%%%%%%%%%%%%%%%%%%%%%%%%%%%%%%%%
%%%%%%%%%%%%%%%%%%%%%%%%%%%%%%%%%%%%%%%%%%%%%%%%%%%%%%%%%%%%%%%%%%%%%%%%%%%%%%%%
%%%%%%%%%%%%%%%%%%%%%%%%%%%%%%%%%%%%%%%%%%%%%%%%%%%%%%%%%%%%%%%%%%%%%%%%%%%%%%%%

In the previous sections we mostly focused on the impact that optimal spectral initializers have regarding (practically) achieving the dPR's theoretical phase transitions. It is not that difficult to see that they can be used as state of the art concept to conceptually even lower the dPR's theoretical phase transitions as well. Namely, Theorems \ref{thm:thm1a} and \ref{thm:thm2a} consider the so-called generic convergence to the global optimum and consequently insist that underlying ${\mathcal {PM}}$s have a single funneling point. If one steps away for a moment from such a generic consideration and instead considers global convergence on a case by case basis then  the  single funneling point ${\mathcal {PM}}$s are only a sufficient condition for dPR's convergence to the global optimum. On the other hand, a necessary condition, for example, would  assume that one starts on a portion of ${\mathcal {PM}}$ from which all descending paths are converging to the global optimum. Keeping this in mind one can look at Figure \ref{fig:fig2} and reevaluate the corresponding plain and lifted RDT curves for various $\alpha<1.4$ and then determine the critical one for which the corresponding spectral initializers fall within the zone of convergence towards the global optimum. For $c=1$ this means that the spectral initializer is on the right descending side of the lifted RDT curve (for $0\leq c< 1$, on the other hand, one actually has to check that the spectral initializer's location on ${\mathcal {PM}}$ is such that from there it must funnel down to the global optimum). In generic scenarios covered by Theorems \ref{thm:thm1a} and \ref{thm:thm2a}, $c=1$ turns out to be the critical. That might also be the case for  $\alpha<1.4$ but it might not and one has to be additionally careful. We skip doing such evaluations as, for practical dPRs and initializers that we consider, their relevance seems  unsubstantiated. However, for different dPR's and/or initializers they might have an important role and that is the reason why we mention it as a useful state of the art concept that is fully covered by the theory that we developed and presented above.

%%%%%%%%%%%%%%%%%%%%%%%%%%%%%%%%%%%%%%%%%%%%%%%%%%%%%%%%%%%%%%%%%%%%%%%%%%%%%%%%
%%%%%%%%%%%%%%%%%%%%%%%%%%%%%%%%%%%%%%%%%%%%%%%%%%%%%%%%%%%%%%%%%%%%%%%%%%%%%%%%
%%%%%%%%%%%%%%%%%%%%%%%%%%%%%%%%%%%%%%%%%%%%%%%%%%%%%%%%%%%%%%%%%%%%%%%%%%%%%%%%
%%%%%%%%%%%%%%%%%%%%%%%%%%%%%%%%%%%%%%%%%%%%%%%%%%%%%%%%%%%%%%%%%%%%%%%%%%%%%%%%
%%%%%%%%%%%%%%%%%%%%%%%%%%%%%%%%%%%%%%%%%%%%%%%%%%%%%%%%%%%%%%%%%%%%%%%%%%%%%%%%
\section{Conclusion}
\label{sec:conc}
%%%%%%%%%%%%%%%%%%%%%%%%%%%%%%%%%%%%%%%%%%%%%%%%%%%%%%%%%%%%%%%%%%%%%%%%%%%%%%%%
%%%%%%%%%%%%%%%%%%%%%%%%%%%%%%%%%%%%%%%%%%%%%%%%%%%%%%%%%%%%%%%%%%%%%%%%%%%%%%%%
%%%%%%%%%%%%%%%%%%%%%%%%%%%%%%%%%%%%%%%%%%%%%%%%%%%%%%%%%%%%%%%%%%%%%%%%%%%%%%%%
%%%%%%%%%%%%%%%%%%%%%%%%%%%%%%%%%%%%%%%%%%%%%%%%%%%%%%%%%%%%%%%%%%%%%%%%%%%%%%%%
%%%%%%%%%%%%%%%%%%%%%%%%%%%%%%%%%%%%%%%%%%%%%%%%%%%%%%%%%%%%%%%%%%%%%%%%%%%%%%%%

We considered the impact that spectral initializers have on the theoretical limits of \emph{descending} phase retrieval algorithms (dPR). \cite{Stojnicphretreal24} studied generic dPRs (which include as special cases any forms of gradient descent or Wirtinger flow) and provided a \emph{precise} theoretical analysis of their performance. Since dPR need starting points (initializers), the results from \cite{Stojnicphretreal24} are here complemented with the corresponding ones related to the initializers. In particular, \cite{Stojnicphretreal24} established that dPRs ability to solve phase retrieval (PR) critically depends on the structure of the associated so-called parametric manifold, ${\mathcal {PM}}$, and additionally uncovered  that the overlap between the algorithmic solution and the true signal is a key ${\mathcal {PM}}$'s component. We here consider the so-called \emph{overlap optimal} spectral initializers (OptSpins) as dPR's starting points and  develop a generic \emph{Random duality theory} (RDT) based program to statistically characterize them. Functional structure of OptSpins is determined and the starting overlaps that they provide for the dPRs are \emph{precisely} evaluated. A precise characterization of the starting overlap allows to determine if ${\mathcal {PM}}$'s so-called \emph{flat regions} can be successfully circumvented (\emph{flat regions} are highly susceptible to \emph{local jitteriness} which represents a key obstacles for dPR to reach global optimum and solve PR).

The presented theoretical analysis allows to observe two key points: \textbf{\emph{(i)}} dPR's  theoretical phase transition (critical $\alpha$ above which they solve PR) might be difficult to practically achieve as flat regions are large and OptSpins fail to avoid them; and \textbf{\emph{(ii)}} Opting for so-called ``\emph{safer compression}'' and slightly increasing  $\alpha$ (by say $15\%$) shrinks flat regions and allows OptSpins to fall outside them which ultimately enables dPRs to solve PR.

A sizeable set of numerical experiments is conducted as well so that the theoretical results are brought to practical relevance in a clearer way. We implemented in parallel plain gradient descent and a hybrid combination of plain and log barrier gradient descent algorithms. Even though the simulations were run for fairly small dimensions ($n=300$) (which implies strong parametric manifold local jitteriness), the  simulated and theoretical phase transitions are fairly close to each other. In particular, a \emph{safer compression} theoretical phase transition adjustment is introduced and shown to almost identically match the presented simulated results.

%To make the presentation neater, we assumed in a standard normal statistical environment.  Such an assumption is not necessary. The central limit theorem (particularly its a Lindeberg variant, see, e.g.,  \cite{Lindeberg22}) allows for a straightforward extension of the presented results to various different statistics. A particularly elegant approach in that regard is discussed can be found in \cite{Chatterjee06}.

Developed theoretical methods are completely different from the typical spectral ones that rely on random matrix and free probability theory. Our methods rely on \emph{Random duality theory} (RDT), are much simpler, produce results that match the ones of the spectral methods, and are more general and adaptable to various signal structuring scenarios. Their generic character allows for  many further extensions and generalizations. For example, extensions to incorporate stability and robustness analyses and adaptations to account for different measurement strategies and complex domain are only a few drops in an ocean of endless possibilities. While conceptual ingredients needed for any of these considerations are closely related to what is presented here and in \cite{Stojnicphretreal24}, accompanying technical details are problem specific and we present them in separate companion papers.

%\newpage1
%\setcounter{page}{1}
\begin{singlespace}
\bibliographystyle{plain}
\bibliography{nflgscompyxRefs}
\end{singlespace}

%%%%%%%%%%%%%%%%%%%%%%%%%%%%%%%%%%%%%%%%%%%%%%%%%%%%%%%%%%%%%%%%%%%%%%%%%%%%%%%%%%%%%%%%%%%%%
%%%%%%%%%%%%%%%%%%%%%%%%%%%%%%%%%%%%%%%%%%%%%%%%%%%%%%%%%%%%%%%%%%%%%%%%%%%%%%%%%%%%%%%%%%%%%
%%%%%%%%%%%%%%%%%%%%%%%%%%%%%%%%%%%%%%%%%%%%%%%%%%%%%%%%%%%%%%%%%%%%%%%%%%%%%%%%%%%%%%%%%%%%%
%%%%%%%%%%%%%%%%%%%%%%%%%%%%%%%%%%%%%%%%%%%%%%%%%%%%%%%%%%%%%%%%%%%%%%%%%%%%%%%%%%%%%%%%%%%%%
%%%%%%%%%%%%%%%%%%%%%%%%%%%%%%%%%%%%%%%%%%%%%%%%%%%%%%%%%%%%%%%%%%%%%%%%%%%%%%%%%%%%%%%%%%%%%
%\appendix
%%%%%%%%%%%%%%%%%%%%%%%%%%%%%%%%%%%%%%%%%%%%%%%%%%%%%%%%%%%%%%%%%%%%%%%%%%%%%%%%%%%%%%%%%%%%%
%%%%%%%%%%%%%%%%%%%%%%%%%%%%%%%%%%%%%%%%%%%%%%%%%%%%%%%%%%%%%%%%%%%%%%%%%%%%%%%%%%%%%%%%%%%%%
%%%%%%%%%%%%%%%%%%%%%%%%%%%%%%%%%%%%%%%%%%%%%%%%%%%%%%%%%%%%%%%%%%%%%%%%%%%%%%%%%%%%%%%%%%%%%
%%%%%%%%%%%%%%%%%%%%%%%%%%%%%%%%%%%%%%%%%%%%%%%%%%%%%%%%%%%%%%%%%%%%%%%%%%%%%%%%%%%%%%%%%%%%%
%%%%%%%%%%%%%%%%%%%%%%%%%%%%%%%%%%%%%%%%%%%%%%%%%%%%%%%%%%%%%%%%%%%%%%%%%%%%%%%%%%%%%%%%%%%%%

\end{document}